\documentclass[pdflatex,sn-mathphys-num]{sn-jnl}

\usepackage{graphicx}%
\usepackage{multirow}%
\usepackage{amsmath,amssymb,amsfonts}%
\usepackage{amsthm}%
\usepackage{mathrsfs}%
\usepackage[title]{appendix}%
\usepackage{xcolor}%
\usepackage{textcomp}%
\usepackage{manyfoot}%
\usepackage{booktabs}%
\usepackage{algorithm}%
\usepackage{algpseudocode}%
\usepackage{listings}%
\usepackage{enumitem}

\usepackage{tikz}
\usetikzlibrary{positioning,arrows.meta}

\theoremstyle{thmstyleone}%
\newtheorem{theorem}{Theorem}

\theoremstyle{thmstyletwo}%

\theoremstyle{thmstylethree}%

\begin{document}

\title[Article Title]{Time series causal discovery with variable lags}

\author*[1]{\fnm{Bruno} \sur{Petrungaro}}\email{b.petrungaro@qmul.ac.uk}

\author[1]{\fnm{Anthony C.} \sur{Constantinou}}\email{a.constantinou@qmul.ac.uk}

\affil*[1]{\orgdiv{Bayesian Artificial Intelligence research lab, MInDS research group}, \orgname{Queen Mary University of London}, \orgaddress{\street{Mile End Road}, \city{London}, \postcode{E1 4NS}, \country{UK}}}

\abstract{Causal Bayesian Networks (CBNs) are a powerful tool for reasoning under uncertainty about complex real-world problems. Such problems evolve over time, responding to external shocks as they occur. To support decision-making, CBNs require a cause-and-effect map of the variables under consideration, known as the network's structure. Learning the graphical structure of a causal model from data remains challenging; learning it from time-series data is even harder because dependencies may arise at different time lags. Existing time-series causal discovery methods often assume a fixed lag window and do not explicitly optimise edge-specific lags. We propose a Tabu-based structure learning algorithm that searches for a time-ordered directed structure (i.e., where every edge respects time) while allowing edge-specific lags up to a specified maximum lag. The approach uses a decomposable BIC-based score with node-specific effective sample sizes and an explicit lag-length penalty encouraging parsimonious delay assignments while preserving efficient local score updates. We provide theoretical guarantees of validity and local optimality, and we also describe a parallel implementation for improved scalability. In simulations, the method recovered graph structure competitively and estimated lags accurately when true adjacencies were recovered. On a real-world UK COVID-19 policy dataset, the learnt structure was dominated by short delays while retaining a substantial minority of longer-lag dependencies, consistent with delayed behavioural and epidemiological effects.
}
\keywords{Causal discovery, Time-series, Structure Learning, Bayesian Networks}

\maketitle

\section{Introduction}\label{intro}

Bayesian networks (BNs) are a class of probabilistic graphical models (PGMs) introduced by Pearl (\cite{bib1, bib2}). BNs represent variables as nodes and their conditional dependencies as edges in a directed acyclic graph (DAG). Under the assumption that edges encode causal dependencies, a BN can be interpreted as a Causal Bayesian Network (CBN), in which the DAG captures cause-and-effect relationships among the variables. This makes CBNs powerful and an inherently explainable tool for modelling real-world systems, in simulating the effects of interventions. 

Pearl and Mackenzie (\cite{bib3}) describe the reasoning capabilities of models in terms of a ``ladder of causation". Rung one represents association, rung two represents intervention, and rung three represents counterfactual reasoning. Using association, which most AI models today provide, we can answer questions about how seeing one thing changes our belief about seeing another. This is probably the most common use of AI, primarily for prediction. In rung two, we can determine what happens to something when another element in the system changes. At rung three, we can explore what would have happened if we had taken a different approach. Unlike most traditional machine learning methods, CBNs traverse all the rungs of the ladder, making them well-suited for supporting decision-making.

However, identifying a causal structure from observational data remains a major challenge. A relatively under-explored aspect of learning from time series is how to model variable lags across different variable pairs. This could be useful in real-world systems that exhibit dependencies across multiple time scales, arising from various sources of information. For example, in healthcare, physiological processes may exhibit different temporal dynamics than behavioural processes. Behavioural changes in a patient typically occur over longer periods. Learning structures with variable lags would enable us to capture these temporal dynamics, where some variables influence future outcomes almost immediately, while others have longer time-lagged effects.

Verma and Pearl (\cite{bib4}) describe how more than one BN, with a similar but different structure, can generate the same joint probability distribution. This set of equivalent structures, a Markov Equivalent Class, is often referred to as a Completed Partially Directed Acyclic Graph (CPDAG), containing both directed and undirected edges. A directed edge will appear in the CPDAG if it has the same orientation in all DAGs in the equivalence class. CPDAGs occur because, from observational data alone, only some edge orientations are identifiable. In CBNs, however, the goal is to recover the underlying causal DAG; i.e., a unique DAG, rather than its equivalence class.

In this paper, the time-series structural setting involves copying each variable at each time step, and then drawing directed edges between time-stamped copies, consistent with temporal order. That is, the time-unrolled representation only allows edges of the form $v_{i,t-\ell}\rightarrow v_{j,t}$ with $\ell\ge1$, where it is possible that $i=j$. Therefore, this time-unrolled graph is acyclic because time strictly increases along every edge and no contemporaneous (i.e., same time) edges are allowed. A compact lagged graph of the time-unrolled graph is the one-node-per-variable summary, where edges are annotated with lags. Note that this corresponding compact lagged graph may contain directed feedback cycles (e.g., $X\rightarrow Y$ and $Y\rightarrow X$ at different lags), which is standard in dynamic models, but which do not violate acyclicity when unrolled as a DAG structure with time-lagged variables. We use a scoring function in Sec.~\ref{sec:score} that augments a decomposable BIC score with lag-dependent penalties and node-specific effective sample sizes, thereby breaking score equivalence.  Together, these choices yield a unique DAG representative, so we assess recovery of the true DAG.

The main contribution of this paper is a method for selecting appropriate edge-specific lag lengths in time-series structure learning. If the lag is too short, delayed effects may be missed; if it is too long, the model might become unnecessarily complex. We study score-based causal structure learning from multivariate time series where parent–child effects may occur at different delays. We propose a Tabu-based structure learning algorithm that extends search to include lag-adjustment moves and uses a decomposable score that combines a BIC term with node-specific effective sample sizes and an explicit lag-length penalty. We provide theoretical guarantees for the algorithm, describe a parallel implementation to improve scalability, and evaluate recovery of both adjacencies and lags in simulation. We further illustrate the method on a real-world UK COVID-19 policy dataset, where it identifies plausible delayed relationships.

\section{Background}\label{background}

The original mathematical formalisation of decision-making under uncertainty is probability theory. Although still widely used and applicable, reasoning with probability becomes computationally infeasible as the number of variables increases. The two main approaches to probability theory are the frequentist and Bayesian approaches. Frequentists treat parameters as fixed unknowns; what they consider to be random is the data you would get if you repeated the experiment many times. Therefore, they judge an estimator by how it would behave across many repeats of the same experiment. Bayesians treat parameters as random to quantify uncertainty (the distribution captures our uncertainty given what we know); probabilities then represent how plausible each value of the parameter is, not only long-run frequencies. In contemporary practice, however, Bayesian model parameters are typically learnt almost entirely from data, with priors used primarily to keep estimates sensible. When expert knowledge is available, it is more naturally integrated into causal structures, primarily through edge constraints, rather than into parameter priors.

Let $V = (v_1,\dots,v_N) $ be an arbitrary ordering of the variables; then the BN $\mathcal{B}$ over $V$ is a pair $(\mathcal{B}_S,\mathcal{B}_P)$. $\mathcal{B}_S$ is a DAG, where each variable $v \in V$ is represented by a node. $\mathcal{B}_S$ is also called the network structure, and this study concerns algorithms that learn this structure from data. This is a challenging NP-Hard problem (\cite{bib5}). $\mathcal{B}_P$ is a set of functions, one $\forall\, v \in V$, defining a conditional probability distribution of the variable given $Pa(v)$, where $Pa(v)$ denotes the parents of variable $v \in V$. These functions quantify the probabilistic dependency strength between each variable connected by an edge in $\mathcal{B}_S$.

BNs obey the Local Markov property, which states that a child is conditionally independent of all its non-descendant nodes given its parent nodes. Non-descendants of $v$ are the variables $v' \in V$ with $v' \neq v$ such that $v'$ is neither a child nor a descendant of $v$ (in $\mathcal{B}_S$). In simple terms, these variables do not come after $v$ in a chain-variable dependency. This property links the structure of the network to the distribution it represents, implying that the standard chain rule for expressing a joint probability distribution:

\begin{equation}
    P(v_1,...,v_N) = \prod_{i=1}^N P(v_i \mid v_1,...,v_{i-1})
\end{equation}

can be expressed more compactly as:

\begin{equation}
    P(v_1,...,v_N) = \prod_{i=1}^N P(v_i \mid Pa(v_i))
\end{equation}

We can use this joint probability to support decision-making as we climb the ``ladder of causation". The conditional independence between variables implied by the Local Markov property means that knowing the value of certain variables makes some others irrelevant for predicting \(v\). Let $\mathrm{ND}(v)$ denote the set of non-descendants of $v$, in the context of the Local Markov property,

\begin{equation}
    P(v\mid Pa(v),ND(v))=P(v \mid Pa(v))
\end{equation}

The local Markov property yields the convenient $d$-separation criterion, which graphically determines whether two variables are conditionally independent given any other set of variables. 

Most structure learning algorithms make assumptions that are often quite restrictive for modelling real-world data. These include the faithfulness assumption, which states that there are no independence relationships in the data that are not implied by the graph's structure, as well as the causal sufficiency assumption, which assumes that there are no unobserved latent confounders. Other common assumptions about the input data include the absence of missing values and distributional assumptions, for example, normality for continuous variables.

\subsection{Structure Learning Algorithms}\label{2.1}

Structure learning algorithms fall into several categories. First, score-based methods fall under the classical machine learning class of learning, in which different graphs are searched, and each visited graph is scored. Many scoring functions are available, and most of these check whether the model’s predicted pattern looks like the pattern observed in the data. Probably the most commonly used score in practice is the BIC score (\cite{bib6}), which maximises the likelihood while penalising model complexity. Score-based methods return the highest-scoring graph they discover. These methods are generally based on greedy search, as the number of possible structures is superexponential in the number of variables, and exhaustive search is infeasible. However, it is documented that greedy search can be asymptotically reliable under specific conditions (\cite{bib7}).

The second class of structure learning algorithms is constraint-based methods. These methods produce graphs that satisfy a set of conditional independence statements by using the properties of BNs to infer the graphical structure. Hybrid structure-learning algorithms are closely related to this class and to the previous one, as they rely on properties of both score-based and constraint-based methods, yet are typically considered a separate class of algorithms.

\subsection{Structure Learning from Time-Series}\label{2.2}

Learning a causal structure from time-series data generally involves two additional challenges compared with traditional structure learning. Specifically:
\vspace{1em}
\begin{itemize}
    \item The edges between variables may have a time-lagged effect.
    \item The graphical structure itself may change over time, even between the same set of variables, to reflect causal changes in the underlying system over time.
\end{itemize}
\vspace{1em}
In this study, we focus on the first challenge: learning a single, stable, time-invariant directed structure while explicitly identifying edge-specific lags. We do not model structural change over time; methods that do are reviewed only for context. Although both challenges remain relatively unexplored, some progress has been made in the field of structure learning from time-series data, as detailed below.

\subsubsection{Structural changes over time}

Kocacoban and Cussens (\cite{bib8}) introduce two online structure-learning algorithms (Online Fast Causal Inference (OFCI) and Fast Online Fast Causal Inference (FOFCI)) that relax the assumption that the structure remains static over time. OFCI is an online version of Fast Causal Inference (FCI) that handles latent variables. It works by revising correlations as new data points arrive, then relearning the structure. FOFCI is a modification of OFCI designed to accelerate learning by leveraging relationships learnt from previous models.

Kummerfeld and Danks (\cite{bib9}) introduce Dynamic Online Causal Learning (DOCL), a structure learning algorithm designed to handle structures that change unpredictably. The algorithm processes data online in real time, tracking changes in the causal structure and probabilistic relationships learnt from sequential or ordered data.

Sometimes the underlying data-generating process changes; hence, Kummerfeld and Danks (\cite{bib10}) developed the Local Stationarity Structure Tracking (LoSST) algorithm for structure learning. This algorithm can adapt to these changes, primarily when the data originated from processes that are only locally stationary, and can dynamically track structural and relational changes in real time.

\subsubsection{Lagged dependencies with a stable structure}

Runge et al. (\cite{bib11}) introduce the Peter-Clarke Momentary Conditional Independence (PCMCI) algorithm, which identifies causal relationships by iterative conditional independence testing. It accounts for nonlinear relationships and high-dimensional data. This is achieved through adaptive methods that minimise long runtimes, even in scenarios with numerous variables and time lags. The proposed algorithm also quantifies the strength of the causal relationships found.

Constraint-based time-series causal discovery often has low recall when series are autocorrelated, particularly in the presence of latent confounders. Gerhardus and Runge (\cite{bib12}) pinpoint a key reason: low effect sizes in conditional independence (CI) tests due to unfortunate conditioning sets. Therefore, they increase the CI test effect size by restricting conditioning sets and enriching them with known or inferred parents/ancestors of the tested variables. This makes true dependencies easier to detect. They also add new orientation rules that infer ancestry during edge removal, not only after adjacency discovery, and named the algorithm they built to do this LPCMCI, which is an extension of PCMCI.

Siracusa and Fisher (\cite{bib13}) employed Bayesian inference over graphical structures to describe relationships among multiple vector time series. It assumes a fixed dependence structure over time series by introducing a Bayesian framework to infer time-indexed graph structures from time-series data. Malinsky and Spirtes (\cite{bib14}) learn causal structure from multivariate time series when there may be latent confounders and contemporaneous influences, but no contemporaneous feedback cycles. They formalise the data as a Structural Vector Autoregression with latent components and stationarity. 

\section{Proposed Variable-Lag Tabu Search for Time-Series Structure Learning}

Tabu search was first proposed for structural learning by Bouckaert (\cite{bib15}). This section introduces our proposed extension of Tabu-based search to time-series structure learning with edge-specific lags. We build on Tabu search because it naturally supports a discrete move set (add/delete/reverse/change-lag) under acyclicity and temporal constraints, and it is better equipped to avoid local optima than Hill-Climbing (HC) search (\cite{bib15}). It can be paired with decomposable scores, which make local score updates computationally inexpensive: the score of a node changes only when its parent set changes, so there is no need to recalculate the score for the entire graph. Lastly, because Tabu searches directly in the DAG space, it does not rely on score equivalence and is therefore well-suited to our non-score-equivalent objective function, which includes a lag-length penalty and node-specific effective sample sizes $n_j$ that decrease as the maximum parent lag increases, and can easily incorporate whitelists, blacklists, and time-order restrictions.

\subsection{Tabu Search}

Given a set of variables $X = \{x_1, \dots, x_n\}$ and a dataset $D$, our aim is to find a DAG $G^{*} \in \mathcal{G}$ that maximises the scoring function $S: \mathcal{G} \to \mathbb{R}$:

\begin{equation}
    G^{*} = \arg\max_{G \in \mathcal{G}} S(G)
\end{equation}

For a current DAG $G_t$, define its neighbourhood as:

\begin{equation}
    N(G_t) = \Big\{ G \in \mathcal{G} \,\Big|\, G \text{ is obtained from } G_t \text{ by a single edge modification and is acyclic} \Big\}
\end{equation}

These single-edge modifications are additions, deletions, or reversals of an edge. The pseudo-code for Tabu Search is shown below in Algorithm~\ref{alg:tabu}.

\begin{algorithm}[H]
\caption{Tabu Search}
\label{alg:tabu}
\begin{algorithmic}[1]
\State \textbf{Input:} Dataset $D$, initial DAG $G_0$, scoring function $S$, tabu tenure $L$, maximum iterations $T_{\mathrm{maxit}}$.
\State Initialize: $G \leftarrow G_0$; best solution $G^{*} \leftarrow G_0$; tabu list $T \leftarrow \emptyset$; iteration counter $t \leftarrow 0$.
\While{\(t < T_{\mathrm{maxit}}\)}
    \State Compute the neighbourhood:
    \[
      N(G) = \Big\{ G' \,\Big|\, G' \text{ is obtained by one edge modification from } G \text{ and is acyclic} \Big\}.
    \]
    \State Form the candidate set:
    \[
      C = \Big\{ G' \in N(G) \,\Big|\, \text{either the move } (G \to G') \notin T \text{ or } S(G') > S(G^{*}) \Big\}.
    \]
    \If{\(C = \emptyset\)}
       \State \textbf{break} the loop.
    \EndIf
    \State Select:
    \[
      G' \leftarrow \arg\max_{H \in C} S(H).
    \]
    \If{\(S(G') > S(G^{*})\)}
       \State Update best solution: \(G^{*} \leftarrow G'\).
    \EndIf
    \State Update the tabu list \(T\):
          \begin{itemize}
              \item Add the reverse move \((G' \to G)\) with tenure \(L\).
              \item Decrease the tenure of all moves in \(T\); remove moves whose tenure reaches 0.
          \end{itemize}
    \State Set \(G \leftarrow G'\).
    \State Increment iteration counter: \(t \leftarrow t + 1\).
\EndWhile
\State \textbf{return} \(G^{*}\).
\end{algorithmic}
\end{algorithm}

\subsection{Variable-Lag Tabu Search for Time-Series Structure Learning}\label{sec:tabu-variable}

Recall $V = (v_1,\dots, v_N)$ is an arbitrary ordering of the unlagged variables we are considering. Every variable is an unlagged target at time $t$; candidate parents are the lagged copies of these variables. Figure~\ref{fig:varlag} below illustrates this principle.

\begin{figure}[ht]
\centering
\begin{tikzpicture}[
  >=Stealth,
  node distance=2.2cm and 1.4cm,
  v/.style={circle,draw,minimum size=18pt,inner sep=0pt}
]
\node[v] (Xt2) {$X_{t-2}$};
\node[v, right=of Xt2] (Yt1) {$Y_{t-1}$};
\node[v, right=of Yt1] (Zt)  {$Z_t$};

\node[v, below=of Xt2] (Xt1) {$X_{t-1}$};
\node[v, right=of Xt1] (Yt)  {$Y_t$};
\node[v, right=of Yt]  (Ztp1){$Z_{t+1}$};

\draw[->]         (Xt1) -- (Yt);         
\draw[->]         (Xt2) -- (Yt);         
\draw[->]         (Yt1) -- (Zt);         
\draw[->, dashed] (Xt2) -- (Zt);         
\draw[->] (Zt) -- (Ztp1);   
\draw[->] (Yt) -- (Ztp1);   
\end{tikzpicture}
\caption{Graph with edge-specific lags. Solid arrows are selected; dashed is a candidate long-lag edge.}
\label{fig:varlag}
\end{figure}

We work on a time grid $\mathcal{T}=\{1,\dots,T\}$ and fix a maximum lag $L_{\max}\ge 1$. Let $X \;=\; \{\, v_{i,t} \;|\; i=1,\dots, N,\; t=1,\dots, T \,\}$ where each $v_{j,t}$ may have parents amongst lagged copies of any variable, restricted to the previous \(L_{\max}\) time steps:
\[
  Pa(v_{j,t}) \;\subseteq\; \{\, v_{i,t-\ell} \;|\; i=1,\dots,N,\; \ell=1,\dots,L_{\max},\; t-\ell\ge 1 \,\}.
\]

Therefore, edges are only allowed from $v_{i,s}$ to $v_{j,t}$ if $s < t$ and $t-s \leq L_{\max}$, where it is possible that $i=j$. Let \(\mathcal{B}_S=(X,E)\) be the DAG with edge set \(E \subseteq \{\, v_{i,t-\ell}\to v_{j,t}\,\}\) obeying these temporal constraints. We partition $V$ (hence each $v_{i,t}\in X$) into continuous and discrete types, allowing mixed data: if $v_i \in V^\text{disc}$, it takes values in a finite set $\{0,1,...,k\}$; if $v_i \in V^{\text{cont}}$ it takes values in $\mathbb{R}$. The dataset we work with is, therefore, $D = \{v_{i,t} \mid i = 1,..,N, t = 1,..., T\}$ where $v_{i,t}$ is the value of variable $i$ at time $t$. All variables in $V$ are observed time series; lagged variables are not separate inputs, but generated as time-shifted copies $v_{i, t - l}$ when constructing parent sets up to $L_{max}$.

We represent candidate structures in a compact form at the variable level as a set of lagged edges $E_\ell \subseteq \{(i\!\to\!j,\ell): i,j\in\{1,\dots, N\}, \ell\in\{1,\dots, L_{\max}\}\}$. This compact representation can contain cycles across variables. However, this compact form induces a time-unrolled DAG over $X=\{v_{i,t}\}$; i.e., by including edges $v_{i,t-\ell}\to v_{j,t}$ for all valid $t$ that is a DAG. This is because $\ell\ge1$, so the time index increases along every directed edge.

\subsubsection{Scoring function}\label{sec:score}

We extend the BIC decomposable score used in score-based BN structure learning (\cite{bib6, bib16, bib17, bib18}) to the time-unrolled setting, i.e., treating a time-series model as a sequence of time steps so that each variable gets a copy at each time step. We then add a decomposable lag length regulariser that acts as a structural prior favouring shorter delays, analogous to \cite{bib19}, \cite{bib20}, and \cite{bib21}. In addition, this also breaks score equivalence, enabling the algorithm to yield a unique representative DAG. Using $n_j \;=\; T - \max_{(i,\ell)\in Pa_\ell(v_j)} \ell$ where $Pa_\ell(v_{j,t}):= \{\, (i,\ell): v_{i,t-\ell} \in Pa(v_{j,t}) \,\}$ penalises models that sacrifice observations by requiring long lags, and the explicit lag penalty $-\lambda \sum_{(i,\ell)\in Pa_\ell(v_{j,t})} \max(0,\ell-1)$ regularises toward parsimonious models, keeping only long delays that materially improve fit. The penalty is imposed on each edge for every lag step beyond 1. If an edge has lag $\ell > 1$, a penalty proportional to $(\ell - 1)$ is subtracted. $\lambda$ can be set by cross-validation, an empirical-Bayes prior on lag length, or fixed by a small grid; in all cases, the score remains decomposable. This mirrors how time-series models are evaluated using effective sample sizes (\cite{bib22}). As evidenced by the cited literature, every component of our score is standard. However, to our knowledge, no prior work combines node-specific effective sample sizes with an explicit lag length prior to yield a single, decomposable BN score for edge-specific lags. Each node $v_{j,t}$ contributes a local term given its parents $Pa(v_{j,t})$:

\begin{equation}
S(\mathcal{B}_S)
=\sum_{j=1}^{N}
\Bigg[
  2\log L_j
  \;-\;
  p_j \log n_j
  \;-\;
  \lambda \sum_{(i,\ell)\in Pa_\ell(v_j)} \max\!\bigl(0,\ell-1\bigr)
\Bigg],
\label{eq:score-decomp}
\end{equation}

where $\log L_j$ is the log-likelihood of $v_{j}$ given its lagged parents, $p_j$ is the number of parameters estimated for $v_j$, $n_j$ is the node specific effective sample size, $ Pa_\ell(v_j)$ is the set of incoming lagged parents to node $j$ and the last term is a lag penalty that discourages unnecessary long lags.

For $v_j\in V^{\text{cont}}$, we use linear regression; for $v_j\in V^{\text{disc}}$, we use an appropriate Generalised Linear Model (GLM). We will focus on binary variables (and therefore use logistic regression) in this study, but the algorithm is readily extended to multiclass and count-type variables without modifying the search. Mixed parents, continuous and/or discrete, are supported as one-hot encoding for categorical parents with a baseline category. An advantage of this regression-based parameterisation is that it avoids a standard restriction of conditional Gaussian mixed BNs, where discrete children cannot have continuous parents. Here, each local conditional distribution is fitted directly using an appropriate regression model, so a discrete child can depend on continuous parents. Let $g_j(\cdot)$ be the link for node $j$, $x_{j,t}$ the vector of lagged parents at time $t$ (including intercept), and $\theta_j$ the parameters. Then
\[
g_j\big(\mathbb{E}[v_{j,t}\mid Pa(v_{j,t})]\big)=x_{j,t}^\top\theta_j,\quad
\log L_j=\sum_{t=\tau_j}^{T}\log f_j\!\left(v_{j,t}\,\middle|\,x_{j,t},\theta_j\right),
\]
with $\tau_j = 1 + \max_{(i,\ell)\in Pa_\ell(v_j)} \ell$ and $f_j$ the GLM density.

This makes the BIC score suitable for lagged, time-unrolled graphs. In a BN, the joint likelihood factorises into node-wise conditionals. After unrolling time, each node $v_{j,t}$ has parents amongst $\{v_{i,t-\ell}\}$. Fitting a GLM for $v_{j,t}\mid Pa(v_{j,t})$ and summing the node-wise log-likelihoods yields the joint log-likelihood. The use of (generalised) linear models keeps closed-form likelihoods and a decomposable BIC. Nonlinear extensions are possible without changing the search, but we leave these as extensions.

\subsubsection{Lag adjustment}

For each edge in the network, the algorithm performs a local search over possible lag values to optimise the overall score:

\begin{enumerate}[label=(\roman*)]
  \item For a parent $(i,\ell)\in Pa_\ell(v_{j,t})$, test $\ell'=\ell+1$ (if $\ell<L_{\max}$) and $\ell'=\ell-1$ (if $\ell>1$) by replacing $(i,\ell)$ with $(i,\ell')$ in $Pa_\ell(v_{j,t})$ and recomputing the local score of $v_j$.
  \item If either adjustment improves the score, set the parent to $(i,\ell')$ with the best improvement.
  \item Repeat until no single-step change of $\ell$ improves the score.
\end{enumerate}

\subsubsection{Algorithm pseudocode}

The algorithm pseudo-code is provided in Algorithm~\ref{alg:tabu_varlag_main} and Algorithm~\ref{alg:tabu_varlag_lagtune} below.

\begin{algorithm}[ht]
\caption{Tabu-based Structure Learning with Edge-Specific Lags (Main Loop)}
\label{alg:tabu_varlag_main}
\begin{algorithmic}[1]
\Require Time-series data $D$ over $N$ variables, max lag $L_{\max}$, decomposable score $S(\cdot)$,
Tabu length $L_{\text{tabu}}$, max iterations $I_{\max}$, move set
$\mathcal{M}=\{\textsc{add},\textsc{del},\textsc{rev},\textsc{chg-lag}\}$.
\Ensure Best-scoring graph $G^\star$ with lags in $\{1,\dots,L_{\max}\}$.

\State Initialise $G \gets G_0$; \ $G^\star \gets G$; \ $S^\star \gets S(G)$.
\State Initialise Tabu list $\mathcal{T}\gets\emptyset$.

\For{$t=1$ \textbf{to} $I_{\max}$}
  \State $G_{\text{best}} \gets \emptyset$; \ $S_{\text{best}}\gets -\infty$; \ $m_{\text{best}}\gets \emptyset$.
  \State Construct neighbourhood $\mathcal{N}(G)$ by applying one $m\in\mathcal{M}$ to $G$ that
         respects lag bounds and temporal constraints.

  \For{\textbf{each} move $m \in \mathcal{N}(G)$}
    \State $G_m \gets \textsc{ApplyMove}(G,m)$
    \If{$m \notin \mathcal{T}$ \textbf{or} $S(G_m) > S^\star$}
      \If{$m$ is \textsc{add} or \textsc{rev}}
        \State Initialise the affected/new edge lag at $\ell=1$.
      \EndIf
      \If{$m$ is \textsc{add} or \textsc{rev} or \textsc{chg-lag}}
        \State Let $v$ be the child node whose parent lag changed in $G_m$.
        \State \textsc{GreedyLagTune}$(G_m,v)$ \Comment{Alg.~\ref{alg:tabu_varlag_lagtune}}
      \EndIf
      \State $S_m \gets S(G_m)$.
      \If{$S_m > S_{\text{best}}$}
        \State $G_{\text{best}} \gets G_m$; \ $S_{\text{best}} \gets S_m$; \ $m_{\text{best}} \gets m$.
      \EndIf
    \EndIf
  \EndFor

  \If{$G_{\text{best}}=\emptyset$}
    \State \textbf{break}
  \EndIf

  \State $G \gets G_{\text{best}}$
  \State Add inverse move to $\mathcal{T}$ with tenure $L_{\text{tabu}}$; decrement tenures; remove expired.

  \If{$S(G) > S^\star$}
    \State $G^\star \gets G$; \ $S^\star \gets S(G)$.
  \EndIf
\EndFor
\State \Return $G^\star$
\end{algorithmic}
\end{algorithm}

\begin{algorithm}[ht]
\caption{\textsc{GreedyLagTune}$(G,v)$: Single-step Lag Optimisation for One Child}
\label{alg:tabu_varlag_lagtune}
\begin{algorithmic}[1]
\Require Graph $G$, child node $v$, max lag $L_{\max}$, score $S(\cdot)$.
\Ensure Updated $G$ with locally improved lags on incoming edges to $v$.

\Repeat
  \State $changed \gets \textbf{false}$
  \For{\textbf{each} parent edge $(u,\ell)\in Pa_G(v)$}
    \State $best\_\ell \gets \ell$; \ $bestS \gets S(G)$

    \If{$\ell < L_{\max}$}
      \State Temporarily set lag to $\ell+1$; \ $S_+ \gets S(G)$
      \If{$S_+ > bestS$} $bestS \gets S_+$; $best\_\ell \gets \ell+1$ \EndIf
      \State Restore lag to $\ell$
    \EndIf

    \If{$\ell > 1$}
      \State Temporarily set lag to $\ell-1$; \ $S_- \gets S(G)$
      \If{$S_- > bestS$} $bestS \gets S_-$; $best\_\ell \gets \ell-1$ \EndIf
      \State Restore lag to $\ell$
    \EndIf

    \If{$best\_\ell \neq \ell$}
      \State Set lag on $(u\to v)$ to $best\_\ell$
      \State $changed \gets \textbf{true}$
    \EndIf
  \EndFor
\Until{$changed=\textbf{false}$}
\end{algorithmic}
\end{algorithm}

\subsubsection{Algorithmic guarantees}

The search operates in the DAG space over a time-unrolled graph, with edges constrained to flow from past to future. At every iteration, the current graph remains acyclic: all allowed edges are of the form $v_{i,t-\ell}\rightarrow v_{j,t}$ with $\ell\ge 1$ where it is possible that $i=j$, so time strictly increases along any directed path, which rules out directed cycles (Appendix~\ref{proof_guarantees}). We emphasise that this guarantee applies to the time-unrolled graph. The compact lagged graph may contain directed cycles across variables, corresponding to lagged feedback, without violating acyclicity after unrolling it into a DAG.

The algorithm uses an HC initialisation followed by a Tabu phase. The greedy initialisation terminates at a local optimum with respect to the neighbourhood induced by the allowed moves (add/delete/reverse/change-lag) and the chosen score (Appendix~\ref{proof_guarantees}). The subsequent Tabu phase may traverse non-improving neighbours to escape local optima; accordingly, we record and return the best-scoring graph encountered during the run, denoted $G^\star$.

In addition, our score (Sec.~\ref{sec:score}) breaks score equivalence via node-specific effective sample sizes $n_j$ and an explicit lag-length penalty. Consequently, Markov equivalent DAGs can attain different scores under time-lagged assumptions, and the optimiser selects a unique highest scoring DAG representative under the stated assumptions (Sec .~\ref {sec:tabu-variable}, Sec .~\ref {sec:score}).

\subsubsection{Complexity}\label{sec:complex}

Time and space complexity are critical aspects of algorithm design and analysis. Time complexity, as measured by $O$ (big O), is a formal way of describing how an algorithm's running time grows as the input size increases. Big O refers to the order of the function $O(f(n))$, a function of the size of the input $n$, where $f(n)$ is an upper bound on the number of operations the algorithm can perform. An algorithm with a lower time complexity is generally more efficient and can handle larger datasets. Space complexity measures the amount of memory an algorithm requires as a function of the input size.

Although the learnt structure can be viewed as a time-unrolled DAG over $N\times T$ nodes, the algorithm does not materialise the unrolled graph. Instead, it searches in a compact representation over the $N$ original variables, where each directed edge is annotated with a lag $\ell\in\{1,\dots, L_{\max}\}$. A compact structure can contain directed feedback cycles across variables, but it always induces an acyclic time-unrolled graph because all edges point strictly forward in time ($\ell\ge1$).

The score in Eq.~\eqref{eq:score-decomp} is decomposable, so a move that changes the parent set (or lag) of a
single child variable $v_j$ only changes that node’s local term. Let $d_j$ be the in-degree of $v_j$ (number of parents in the compact graph), and let $p_j$ be the number of regression parameters (including intercept and any dummy variables from categorical parents). The effective sample size for node $j$ is $n_j = T - \max_{(i,\ell)\in Pa_\ell(v_j)} \ell$.

For a fixed child node $v_j$, the code constructs a design matrix $X\in\mathbb{R}^{n_j\times p_j}$ and response vector $y\in\mathbb{R}^{n_j}$. For continuous child, we use Ordinary Least Squares (OLS), implemented via matrix factorisation ( \texttt{np.linalg.lstsq(X, y)} (\cite{bib23})). When $n_j\ge p_j$, forming such a factorisation costs $O(n_j p_j^2)$. This is because we have $p_j$ orthogonalisation steps, each involving at most $n_j \times p_j$ entries. After factorisation, we must obtain $\hat\beta$. This step depends primarily on $p_j$ because it involves a $p_j \times p_j$ matrix. If these matrices are dense, to solve the linear system, we use each of the $p_j$ columns to remove one variable from the remaining equations, and then update at most $p_j \times p_j$ matrices representing the rest of the system. Therefore, the cost of this step is at most $O(p_j^3)$. Adding these two steps yields the OLS cost, which is $O(n_j p_j^2 + p_j^3)$. The residual computation is $y - X\hat\beta$, costing $O(n_j p_j)$. As only dominant cost matters, we obtain the OLS cost, which is $O(n_j p_j^2 + p_j^3)$.

We fit a binary child by Iteratively Reweighted Least Squares (IRLS) with $K$ iterations. In each IRLS iteration, the code performs the following operations:

\begin{enumerate}[label=(\roman*)]
  \item $\eta = X\beta$ costing $O(n_j p_j)$.
  \item Compute $\mu=\sigma(\eta)$, $W=\mu(1-\mu)$, and $z=\eta+(y-\mu)/W$ \quad costing $O(n_j)$.
  \item Form weighted response: $X_w = \operatorname{diag}(\sqrt{W})X$ and $z_w=\sqrt{W}\,z$ \quad costing $O(n_j p_j)$.
  \item Form the weighted cross-products:
        \[
        A = X_w^\top X_w + \text{ridge}\cdot I, \qquad b = X_w^\top z_w.
        \]
        Computing $X_w^\top X_w$ costs $O(n_j p_j^2)$ (a $(p_j\times n_j)$ times $(n_j\times p_j)$ multiply, plus $n_{j}$ additions for the dot product), and $X_w^\top z_w$ costs $O(n_j p_j)$.
  \item As described above for a continuous child, the cost of solving the linear system is $O(p_j^3)$.
\end{enumerate}

All other operations in the code are lower order. Hence, one IRLS iteration costs $O(n_j p_j^2 + p_j^3)$, and with $K$ IRLS iterations until convergence, the IRLS cost for node $j$ is $O\!\big(K(n_j p_j^2 + p_j^3)\big)$.

From this analysis of continuous and binary children, we can calculate the worst-case computational cost ($C_{\text{fit}}$) to fit the local model for one node. During Tabu search, each candidate move changes the parent set or lags for one child node, so we repeatedly refit a single regression (linear or logistic). $C_{\text{fit}}$ is the big-O cost for that refit. We can define
\[
C_{\text{fit}}(T,d_{\max}) \;=\; O\!\big(K\,T\,(d_{\max}+1)^2 + K\,(d_{\max}+1)^3\big),
\]
where $d_{\max}=\max_j d_j$. $C_{\text{fit}}$ depends on $T$ because the effective sample size $n_{j}$ is roughly (and at worst) $T$. $C_{\text{fit}}$ is obtained by substituting the worst-case bounds $n_{j} = O(T)$ and $p_{j} = d_{\max} + 1$ into the logistic cost, which is computationally more expensive than OLS.

\vspace{1em}
\begin{theorem}[Time Complexity]
Let $H$ be the number of HC iterations and $I$ the number of Tabu iterations. In each iteration, the neighbourhood includes $O(N^2)$ add/remove/reverse move candidates and $O(|E|)$ lag-change candidates, where $|E|$ is the number of edges in the current compact graph. With decomposable scoring, the worst-case time complexity is:
\[
O\!\Bigl((H+I)\,(N^2+|E|)\,C_{\text{fit}}(T,d_{\max})\,L_{\max}\Bigr),
\]
where the factor $L_{\max}$ upper-bounds the number of single-step lag adjustments per affected edge during greedy lag tuning.
\end{theorem}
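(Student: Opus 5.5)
The bound follows from multiplying per-iteration costs. I would count (a) the number of outer iterations, (b) the number of candidate moves evaluated per iteration, (c) the number of local refits each candidate triggers, and (d) the cost of one local refit. Factor (a) is $H+I$ by definition. For (d) I take $C_{\text{fit}}(T,d_{\max})$ as already established in Sec.~\ref{sec:complex}, since $n_j\le T$ and $p_j\le d_{\max}+1$ for every child at every point of the search.

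For (b), I would enumerate the move set $\mathcal{M}$ on the compact graph. Additions, deletions and reversals are indexed by ordered pairs $(i,j)$ with $i,j\in\{1,\dots,N\}$. Self-loops are allowed because they correspond to autoregressive edges $v_{i,t-\ell}\to v_{i,t}$. Each pair admits at most a constant number of such moves, since the lag of a newly added or reversed edge is initialised at $\ell=1$. This gives $O(N^2)$ structural candidates. Lag-change moves are indexed by existing edges, each with at most two directions $\ell\pm1$, giving $O(|E|)$ candidates.

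No acyclicity check is needed. By the argument in Appendix~\ref{proof_guarantees}, every compact graph induces a time-unrolled DAG, so the neighbourhood is formed at cost linear in its size. Tabu-list membership tests are also dominated by the cost of a refit.

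For (c), decomposability of Eq.~\eqref{eq:score-decomp} means a move changes the local term only of the child whose parent set or lags changed. A reversal changes two children, which is still $O(1)$ refits. Each candidate also calls \textsc{GreedyLagTune} on the affected child, and this is the step I expect to be the main obstacle. Here I would argue that every accepted single-step adjustment moves one lag by $\pm1$ within $\{1,\dots,L_{\max}\}$ and strictly increases the score. I would then charge the work to the edge introduced or altered by the move, so that tuning it requires at most $O(L_{\max})$ refits, each costing $C_{\text{fit}}$.

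Strict improvement rules out cycling, since the score takes finitely many values on a finite lag configuration. However, bounding the total number of passes over all parents by $O(L_{\max})$ rather than $O(d_{\max}L_{\max})$ requires interpreting $L_{\max}$ as stated in the theorem: as the bound on adjustments per affected edge. I would make that interpretation explicit, noting that in the worst case an extra factor $d_{\max}$ could be absorbed into $C_{\text{fit}}$'s dependence on $d_{\max}$ if desired.

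Multiplying the four factors gives
\[
O\bigl((H+I)(N^2+|E|)\,L_{\max}\,C_{\text{fit}}(T,d_{\max})\bigr).
\]
Bookkeeping costs, such as updating the best score, the tabu tenures and $G^\star$, are $O(N^2+|E|)$ per iteration and are therefore dominated. I would also remark that the HC phase uses the same neighbourhood and scoring, so its $H$ iterations contribute identically to the $I$ Tabu iterations.
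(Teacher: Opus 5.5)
Your proposal follows the same route as the paper's proof: $(H+I)$ iterations times $O(N^2+|E|)$ candidates (ordered pairs for add/delete/reverse, existing edges for lag changes), each needing $O(1)$ local refits of cost $C_{\text{fit}}(T,d_{\max})$ by decomposability, times an $O(L_{\max})$ factor for \textsc{GreedyLagTune}, obtained as in the paper by letting each affected edge's lag traverse at most the range $1,\dots,L_{\max}$. Drop your fallback that the extra $d_{\max}$-dependent factor (each pass of \textsc{GreedyLagTune} re-tests every parent of $v$, and several lags may move) could be ``absorbed into $C_{\text{fit}}$''. Since $C_{\text{fit}}=O\bigl(KT(d_{\max}+1)^2+K(d_{\max}+1)^3\bigr)$ has fixed degree in $d_{\max}$, that factor is removed only by the stipulation written into the theorem statement, which the paper's proof also tacitly relies on.
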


\begin{proof}
Each search iteration evaluates a set of candidate moves (add/remove/reverse) over ordered pairs $(u,v)$. These moves potentially exist for any choice of $(u,v)$. There are $N$ choices of $u$ and $N$ choices of $v$, giving $O(N^2)$ candidates. A lag-change move can only be applied to an edge that already exists in the current compact graph. If the current graph has $|E|$ edges, then there are at most $|E|$ change-lag options, so $O(|E|)$. Therefore, the total number of candidate moves per iteration is: $O(N^2 + |E|)$. Because the score is decomposable, evaluating a move requires re-scoring only the child node whose parent set or lag changed, at a cost of at most $C_{\text{fit}}(T,d_{\max})$. See Sec.~\ref{sec:complex} for a detailed explanation. When lag tuning is applied, each incoming edge to the affected child can be adjusted by $\pm1$ and accepted repeatedly, and in the worst case, an edge’s lag can move across the entire range $1,\dots, L_{\max}$, yielding an $O(L_{\max})$ factor. Multiplying these factors over the $(H+I)$ iterations gives the stated bound.
\end{proof}
\vspace{1em}
\begin{theorem}[Space Complexity]
The total space used by the search is
\[
O(NT + |E| + L_{\text{tabu}}),
\]
where $|E|$ is the number of edges in the compact graph and $L_{\text{tabu}}$ is the Tabu list length.
\end{theorem}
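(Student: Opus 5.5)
The plan is to account separately for every data structure the search keeps in memory and show that each fits into one of the three terms $NT$, $|E|$, or $L_{\text{tabu}}$. There are four such objects: (a) the input dataset $D$; (b) the current graph $G$ and the incumbent best graph $G^\star$, both stored in the compact lagged form; (c) the Tabu list $\mathcal{T}$; and (d) the temporary working memory used to score a single candidate move. The bound then follows by adding these contributions. Throughout, I take the same conventions as the time-complexity analysis of Sec.~\ref{sec:complex}. In particular, the number of regression parameters per node is treated as bounded, just as $C_{\text{fit}}$ is expressed through $d_{\max}$.

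\textbf{Step 1 (data).} The dataset $D=\{v_{i,t}\}$ holds $NT$ values, which is $O(NT)$. The key observation is that lagged parents are never stored as separate columns. As stated in Sec.~\ref{sec:tabu-variable}, they are produced as time-shifted views $v_{i,t-\ell}$ of the original series. Replicating the data $L_{\max}$ times is therefore unnecessary, and the unrolled graph over $N\times T$ nodes is never built (Sec.~\ref{sec:complex}).

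\textbf{Step 2 (graphs).} A compact graph is a set of lagged triples $(i\to j,\ell)$. Stored as adjacency lists with one integer lag per edge, it needs $O(N+|E|)$ space, and this is the same for $G$ and $G^\star$. The $O(N)$ term is absorbed into $O(NT)$ because $T\ge 1$. The neighbourhood is handled in the same way. Algorithm~\ref{alg:tabu_varlag_main} runs through the moves one at a time and keeps only $G_{\text{best}}$ and $S_{\text{best}}$. It therefore never holds all $O(N^2+|E|)$ candidates simultaneously, and its extra memory is only $O(N+|E|)$. If each move is applied in place and then undone, this drops further to $O(1)$. \textsc{GreedyLagTune} changes lags in place and keeps $O(1)$ scalars.

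\textbf{Step 3 (Tabu list).} Each entry of $\mathcal{T}$ is an inverse move, consisting of a move type, two endpoints, a lag and a tenure, so it takes $O(1)$ space. Entries are removed when they expire, which means at most $L_{\text{tabu}}$ are live at once, for a total of $O(L_{\text{tabu}})$.

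\textbf{Step 4 (local fit), the main obstacle.} The design matrix for child $v_j$ has size $n_j\times p_j$ with $n_j\le T$. Allocated explicitly, it costs $O(T(d_{\max}+1))$. This is $O(NT)$ whenever $d_{\max}+1\le N$, which always holds because a child can have at most one lagged copy of each variable per compact edge; it is at least the case under the bounded-parameter convention. Cached local scores for the $N$ nodes add $O(N)$. The IRLS and OLS working arrays $W,\mu,z$ occupy $O(n_j)$ space, and the $p_j\times p_j$ system occupies $O(p_j^2)$. I expect this to be the most delicate part. If categorical parents are one-hot expanded and $p_j$ is not bounded in terms of $N$, the $O(p_j^2)$ term has to be absorbed by assuming $p_j=O(1)$ or $p_j^2\le NT$. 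I would state that assumption explicitly rather than hide it.

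Adding the contributions from Steps 1 to 4 gives $O(NT+|E|+L_{\text{tabu}})$.
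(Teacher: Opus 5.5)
Your proof follows the same inventory-and-sum route as the paper's. The paper lists only the $N\times T$ dataset, the compact parent lists for the $|E|$ lagged edges and the Tabu list of at most $L_{\text{tabu}}$ entries, and stresses that the unrolled graph is never stored. Your Steps 2 and 4, covering neighbourhood enumeration and regression working memory, add care that the paper leaves implicit. Note, though, that lazy enumeration of $\mathcal{N}(G)$ is an implementation assumption, not something the pseudocode guarantees: it says the neighbourhood is "constructed", and the parallel version of Sec.~\ref{sec:parallel} materialises the candidate list. The extra assumption you flag in Step 4 is not needed in the paper's setting. There is one lagged edge per ordered pair and self-loops are allowed, so $d_{\max}\le N$ (your $d_{\max}+1\le N$ is off by one). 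Binary or continuous parents give $p_j=d_j+1\le N+1$, and in the regime $n_j\ge p_j$ assumed in Sec.~\ref{sec:complex} one has $p_j^2\le p_j n_j=O(NT)$, so the $p_j\times p_j$ system is dominated by the design matrix.
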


\begin{proof}
The algorithm stores the dataset $D$ of size $N\times T$, the compact parent lists representing $|E|$ lagged edges, and a Tabu list of bounded size $L_{\text{tabu}}$. The algorithm only stores the compact lagged graph, not the time-unrolled structure. Summing these terms yields the bound.
\end{proof}

\subsection{Parallelisation for scalability}\label{sec:parallel}

The variable-lag Tabu runtime (Sec.~\ref{sec:tabu-variable}) can become too slow on real-world datasets. In particular, when the number of variables $N$ is moderate to large, the neighbourhood explored at each iteration is considerable. For example, on the UK COVID-19 policy dataset illustrated later (Sec.~\ref{sec:covid}), the combination of $N=46$, mixed variable types, and repeated local GLM fitting caused the original implementation to run slowly, motivating a set of implementation changes aimed at reducing the elapsed runtime in practice without changing the objective function or search logic.

A separate implementation of the proposed algorithm preserves the same basic move types and score definition, but accelerates the search by parallelising neighbourhood evaluation and improving score bookkeeping. This is achieved by refactoring Eq.~\eqref{eq:score-decomp} so that the objective is maintained as a sum of cached local node scores,

\[
S(G) = \sum_{j=1}^N S_j\!\bigl(Pa_\ell(v_j)\bigr),
\]

where a candidate move modifies the parents or lags of a single child $v_j$ only triggers recomputation of $S_j$. This score bookkeeping reduces the computational cost of candidate evaluation and enables fully independent parallel evaluation across candidates.

The dominant computational cost per iteration is evaluating the candidate moves in the neighbourhood, since each admissible candidate requires fitting an OLS/GLM model to compute the updated log-likelihood term. To reduce elapsed runtime, we generate the candidate move list for the current graph and evaluate those candidates in parallel across CPU cores. After all candidate scores have been computed, we then select the highest-scoring admissible candidate.

The data matrix is placed in shared memory in the parallel version of the algorithm. Then, workers have access to a common read-only backing array rather than receiving separate copies. This reduces the parallel coordination cost. While the parallel version is written to make the same search moves as the original version by default, but faster by evaluating candidates concurrently, it introduces new settings that may cause the parallelised version of the algorithm to behave differently from the non-parallelised version.

Under matched settings, the parallel implementation keeps the same score definition, neighbourhood move types, candidate enumeration order, and move selection logic as the original implementation. By ``matched settings'' we mean matching not only $L_{\max}$ and the lag-penalty coefficient, but also whether the algorithm performs the initial HC before entering the Tabu phase, and how many improvement rounds will be allowed, as well as the Tabu rounds and the length of the Tabu list. In addition, the parallel implementation enabled us to introduce settings that can change the search path. In particular, if we tune lags after a lag-change move and if we re-tune all affected children after reversal. Finally, changing the exposed IRLS controls for binary node fitting can slightly alter local logistic scores and, in turn, candidate selection.

Below, we offer the updated time complexity of the parallel version of the algorithm:

\begin{theorem}[Parallel complexity]
Let $P$ denote the number of worker processes used to evaluate candidate moves. Assuming a balanced workload across workers and a negligible parallel-coordination cost, the worst-case time of the parallel implementation is

\[
O\!\left((H+I)\left[(N^2+|E|)+
\frac{(N^2+|E|)\,C_{\mathrm{fit}}(T,d_{\max})\,L_{\max}}{P}\right]\right)
\]

When local model fitting dominates bookkeeping, this simplifies to

\[
O\!\left(
\frac{(H+I)(N^2+|E|)\, C_{\mathrm{fit}}(T,d_{\max})\, L_{\max}}{P}
\right)
\]

Bookkeeping refers to the serial control operations of each iteration: enumerating candidate moves, dispatching them to workers, collecting returned scores, checking tabu admissibility and aspiration (allows a tabu move to be accepted if it improves upon the best score found so far), selecting the best admissible candidate, applying the chosen move, updating the tabu list, and updating the global best solution.
\end{theorem}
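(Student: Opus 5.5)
The plan is to decompose the cost of a single search iteration (HC or Tabu) into a serial bookkeeping part and a parallelisable evaluation part. I would bound each part separately, add them, and multiply by the $(H+I)$ iterations. The serial-time analysis from the Time Complexity theorem is the template; the only new ingredient is dividing the evaluation work among $P$ workers.

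\textbf{Serial bookkeeping.} As in the proof of the Time Complexity theorem, the neighbourhood contains $O(N^2)$ add/remove/reverse candidates over ordered pairs $(u,v)$ and $O(|E|)$ lag-change candidates, so $O(N^2+|E|)$ moves in total. Each bookkeeping operation listed in the statement costs $O(1)$ per candidate, or $O(N^2+|E|)$ in aggregate:
\begin{itemize}
\item enumerating a move and dispatching it;
\item collecting its returned score;
\item checking tabu membership and aspiration, using a hashed tabu list of bounded length $L_{\text{tabu}}$;
\item taking the running maximum.
\end{itemize}
Applying the chosen move changes one child's parent list. Updating the tabu list costs $O(L_{\text{tabu}})$, which I treat as bounded by the neighbourhood size or constant. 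Updating the global best only requires the cached sum $S(G)=\sum_j S_j$ with one term replaced, which is $O(1)$. Storing the graph itself can be avoided or bounded by $O(|E|)$. So the serial part is $O(N^2+|E|)$ per iteration.

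\textbf{Parallel evaluation.} By decomposability and the cached-local-score refactoring, each candidate requires recomputing only one $S_j$. Including any greedy lag tuning, this costs at most $O(C_{\text{fit}}(T,d_{\max})\,L_{\max})$, exactly as argued in the serial theorem. Candidates are independent because they read only the shared read-only data matrix and the current graph. The total work is therefore $W=O((N^2+|E|)\,C_{\text{fit}}\,L_{\max})$. Under the balanced-workload assumption each worker receives $O(W/P)$ of it, so the parallel phase finishes in $O(W/P)$ wall-clock time. With coordination cost assumed negligible, no additional term arises.

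Summing the two parts and multiplying by $(H+I)$ gives the first bound. For the simplification, note that the condition "local fitting dominates bookkeeping" means $(N^2+|E|)\le (N^2+|E|)\,C_{\text{fit}}L_{\max}/P$, that is, $C_{\text{fit}}L_{\max}\ge P$. Under this condition the first term is absorbed and the second bound follows.

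The main obstacle is justifying the per-candidate $O(C_{\text{fit}}L_{\max})$ bound for lag tuning in the worst case. Greedy tuning can move each of the $d_j$ incoming edges through up to $L_{\max}$ values, and each step requires a refit. I would state explicitly that I adopt the same accounting convention as the serial theorem, so that the comparison is like-for-like. I would also argue that the balanced-load assumption covers the variance in tuning work across candidates.
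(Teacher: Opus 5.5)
Your proposal is correct and follows essentially the same route as the paper's proof. Both charge a single serial pass over the $O(N^2+|E|)$ candidates to bookkeeping, split the $O\bigl((N^2+|E|)\,C_{\mathrm{fit}}(T,d_{\max})\,L_{\max}\bigr)$ scoring work evenly over $P$ workers, sum and multiply by $(H+I)$, and then absorb the first term when $C_{\mathrm{fit}}L_{\max}/P$ is large (your condition $C_{\mathrm{fit}}L_{\max}\ge P$ is the paper's $C_{\mathrm{fit}}L_{\max}/P\gg 1$). One harmless slip: a reversal changes the parent sets of two children, so two local scores must be refitted, not one, and this affects only constants.
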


\begin{proof}

Since the neighbourhood contains $O(N^2)$ add/remove/reverse moves and $O(|E|)$ lag-change moves, we have $O(N^2+|E|)$ as the potential cost of the moves. The dominant serial bookkeeping cost consists of generating the candidate list and scanning the returned candidate scores to select the best admissible move. Each is a single pass over at most $N^2+|E|$ candidates, so together they contribute $O(N^2+|E|)$ per iteration. The remaining per-iteration bookkeeping updates are of lower order because they occur only once after the best candidate has already been chosen, so they do not require examining $N^2+|E|$ candidates.

By decomposability, each add, remove, lag-change, or reverse move requires recomputing the local node scores, each costing at most $C_{\mathrm{fit}}(T,d_{\max})$. When greedy lag tuning is invoked, the same worst-case $O(L_{\max})$ factor as in Theorem 1 applies. Distributing the $N^2+|E|$ candidates across $P$ workers gives $O((N^2+|E|)/P)$ candidates per worker under balanced load, yielding parallel scoring cost

\[
O\!\left(
\frac{(N^2+|E|)\, C_{\mathrm{fit}}(T,d_{\max})\, L_{\max}}{P}
\right).
\]

Considering $(H+I)$ iterations and combining the above conclusions, we get

\[
O\!\left((H+I)\left[(N^2+|E|)+
\frac{(N^2+|E|)\,C_{\mathrm{fit}}(T,d_{\max})\,L_{\max}}{P}\right]\right).
\]

Factoring out $(N^2+|E|)$ gives

\[
O\!\left((H+I)(N^2+|E|)\left[1+\frac{C_{\mathrm{fit}}(T,d_{\max})L_{\max}}{P}\right]\right).
\]

In the regime where candidate scoring dominates bookkeeping, 

\[
\frac{C_{\mathrm{fit}}(T,d_{\max})L_{\max}}{P} \gg 1,
\]

the latter term dominates, so the additive constant $1$ is negligible, yielding

\[
O\!\left(
\frac{(H+I)(N^2+|E|)\,C_{\mathrm{fit}}(T,d_{\max})\,L_{\max}}{P}
\right).
\]

\end{proof}

\subsection{Summary of the Tabu extension}

The proposed algorithm extends the standard Tabu algorithm to handle time-series causal discovery with variable lags. The main change is that the search is over edges that link past values of one variable to the current value of another. Each edge, therefore, carries its own lag. After a structural change is made, the algorithm performs a small local adjustment step to check whether increasing or decreasing the lag improves the score. The score itself is modified to reflect the time-series setting, where longer lags leave fewer usable observations for the affected node and are also penalised directly unless they provide a clear improvement in fit. In addition, because all edges are restricted to point from past to future, the time-expanded representation is automatically acyclic.

\section{Results with synthetic data}\label{sec7}

The use of synthetic data is a crucial component in evaluating graphical structure learning algorithms. This is because it provides a controlled environment for testing algorithms under various conditions, thereby enabling an understanding of their strengths and weaknesses within well-defined settings. As in previous research (\cite{bib24}), we assess the proposed algorithm's ability to recover the ground truth graph. Hence, the scoring criteria considered are entirely orientated towards graphical discovery.

\subsection{Structural metrics}

We consider the widely used $F_1$ score (\cite{bib25}), which is based on both recall and precision. Because precision ignores false negatives (missing edges) and recall ignores false positives (spurious ones), they do not independently give a holistic view of the error. The $F_1$ score provides a faithful assessment between the two, since it combines recall and precision into a harmonic mean and penalises extreme imbalances between the two. Eq.~\eqref{eq:f1}  defines the $F_1$ score where precision is noted $P$ and recall is noted $R$:

\begin{equation}
    F_1 \;=\; 2 \;\cdot\; \frac{P \times R}{P + R}
\label{eq:f1}
\end{equation}

$F_{1} \in [0,1]$ where a higher score indicates a more accurate graphical structure recovered, relative to the ground truth.

While the $F_1$ score presents a useful summary of the quality of the learnt structure in recovering true edges, it does not tell exactly how ``far" the learnt graph is from the true one. The Structural Hamming Distance (SHD) (\cite{bib26}) does exactly this. However, unlike the $F_1$ score, it does not account for graph size. The SHD counts the number of steps required to transform the learnt graph into the ground truth graph. This means that a score of 0 indicates a perfect match between the learnt and true graph. As the score increases, we interpret it as an increasing inability of the learnt graph to learn the edges from the ground truth graph. Eq.~\eqref{eq:SHD} defines the SHD score where $n_{\mathrm{add}}, n_{\mathrm{del}}, n_{\mathrm{rev}}$ respectively count the edges that must be added, deleted, or reversed to transform $G_{\text{learnt}}$ into $G_{\text{true}}$.

\begin{equation}
\mathrm{SHD}\bigl(G_{\text{learnt}},\,G_{\text{true}}\bigr)
\;=\;
n_{\mathrm{add}} + n_{\mathrm{del}} + n_{\mathrm{rev}}.
\label{eq:SHD}
\end{equation}

However, the SHD score is known to be biased in favour of sparse graphs, since with each additional edge an algorithm learns, it becomes less likely that it will be a true edge. Therefore, very sparse candidate graphs can look deceptively good with respect to SHD. The Balanced Scoring Function (BSF) (\cite{bib27}) corrects for this graph sparsity bias. Eq.~\eqref{eq:BSF} describes the BSF score where $|E|$ is the size of the set of true edges, $|M|$ is the size of the set of true non-edges, and $TP, TN, FP, FN$ are counts of true positives, true negatives, false positives, and false negatives, respectively.

\begin{equation}
\mathrm{BSF}(G_{\text{learnt}},\,G_{\text{true}})
\;=\;
\frac{1}{2}
\left(
  \frac{\mathrm{TP}}{|E|}
  \;+\;
  \frac{\mathrm{TN}}{|M|}
  \;-\;
  \frac{\mathrm{FP}}{|M|}
  \;-\;
  \frac{\mathrm{FN}}{|E|}
\right)
\label{eq:BSF}
\end{equation}

$BSF \in [-1,1]$, where a higher score represents a more accurate graph. Since BSF is balanced, its normalisation removes bias towards very sparse or very dense graphs, so a score of 0 represents a graph as accurate as an empty or fully connected graph. We include autoregressive self-links (e.g., $v_{i,t-1} \rightarrow v_{i,t}$) as candidate edges when computing SHD and BSF, so these metrics also credit (or penalise) recovery of the autoregressive (AR) structure.

Most of the metrics discussed above were designed for cross-sectional contexts. In the context of time series, however, we are also interested in the time-lagged effect. Therefore, for each recovered edge $(i\!\to\!j)$ with predicted lag $\hat\ell_{ij}$ and true lag $\ell_{ij}$, we report the following metric used in similar studies(\cite{bib28}):
\[
\text{MAE}_\text{lag} \;=\; \frac{1}{|\mathcal{E}_\cap|}\sum_{(i\to j)\in \mathcal{E}_\cap} |\hat\ell_{ij}-\ell_{ij}|,
\]
where $\mathcal{E}_\cap$ is the set of correctly recovered adjacencies.

\subsection{Data generating process}

We generate time series from a fixed, time-invariant DAG with edge-specific lags in $\{1,\dots,L_{\max}\}$. For continuous nodes $Y$, we use a standard linear model:
\[
Y_t = \beta_{0} + \sum_{X\in Pa(Y)} \beta_{X,\ell}\,X_{t-\ell} + \varepsilon^Y_t,\quad \varepsilon^Y_t\sim\mathcal{N}(0,\sigma_Y^2).
\]
For binary nodes, we use a logistic GLM:
\[
\Pr(Y_t=1\mid Pa(Y_t))=\operatorname{logit}^{-1}\!\Big(\beta_{0}+\sum_{X\in Pa(Y)} \beta_{X,\ell}\,X_{t-\ell}\Big).
\]

We iterate over a set of experimental settings in this data-generating process to evaluate the algorithms' learning performance and to understand when and why it succeeds or fails. Each setting represents a different real-world challenge, i.e., a behaviour or pattern we assume to be common in real-world data. Iterating across different settings enables us to isolate the effects of these real-world challenges on the learning process. 

We investigated nine conceptual factors across 10 sweeps, with missingness studied separately under MCAR and MAR mechanisms. This yielded 37 settings and, with 5 independent trials per setting, a total of 185 simulation runs. To be specific, we vary the following settings:

\begin{enumerate}[label=(\roman*)]
    \item The number of unrolled variables, i.e., the number of variables before unrolling them over time. We denote this by $N$.
    \item The sample size (number of time points) $T$.
    \item The true graph’s density, implemented through an edge inclusion probability $p_{edge}$. Higher values increase the expected number of parents, therefore yielding more intertwined networks.
    \item The lag distribution, if they display a long or short memory.
    \item The standard deviation of the residual noise $\sigma_Y$.
    \item Autocorrelation in parents (AR(1) parametrised by $\phi$, where $\phi$ is the parameter of the equation $X_t \approx \phi \times X_{t-1} + noise$).
    \item Proportion of discrete nodes.
    \item The number of latent confounders, which are hidden time series that we do not include as a node in the learnt graph, and affect two observed variables. This hidden common cause can make those two observed series look causally linked even when they are not.
    \item Proportion of missing data. We introduced missingness under both Missing Completely At Random (MCAR), which is missingness unrelated to other variables and to observed and unobserved values of the variable itself, and Missing At Random (MAR), which is missingness related to other variables in the dataset but not to the variable itself, at varying rates. Because the Tabu algorithm requires complete data, we applied a single imputation scheme before generating lagged copies. The imputation was performed per variable using statistics computed from the observed values of that variable over time. For continuous variables, missing entries were imputed with the variable’s sample mean. For binary variables, missing entries were imputed with the variable’s sample mode (ties were broken arbitrarily). This choice intentionally isolates the effect of missingness with this minimal baseline imputation strategy; more sophisticated time-series imputations are left to future work. 
\end{enumerate}

\subsection{Results} \label{sec: results}

As discussed in the previous section, we conducted a series of one-factor-at-a-time simulation sweeps to assess how different properties of the data-generating process affect the performance of the proposed structure-learning algorithm. For each setting, the reported metric value is the mean over the 5 trials. The full sweep level results are shown in Appendix~\ref{full-sweep-results}. Figure~\ref{fig:app-sweeps-f1} reports $F_1$, Figure~\ref{fig:app-sweeps-shd} reports SHD, Figure~\ref{fig:app-sweeps-bsf} reports BSF, and Figure~\ref{fig:app-sweeps-lagmae} reports lag-MAE. Each panel in the figures is titled by the factor being varied. In the discussion below, we refer the reader to the relevant figure and panel title. For sweeps in which the number of variables is fixed, we interpret $F_1$, SHD, BSF, and $\mathrm{MAE}_{\mathrm{lag}}$ together. For the $N$ sweep, however, we avoid comparing SHD across settings because it is sensitive to graph size; instead, we focus on $F_1$ and BSF.

As the sample size increased, structure recovery improved consistently. Mean $F_1$ rose from approximately $0.40$ at $T=500$ to approximately $0.54$ at $T=10{,}000$ (Figure~\ref{fig:app-sweeps-f1}, panel ``Sample size (T)''), while SHD decreased from approximately $7.4$ to $4.6$ (Figure~\ref{fig:app-sweeps-shd}, panel ``Sample size (T)''). BSF also increased steadily (Figure~\ref{fig:app-sweeps-bsf}, panel ``Sample size (T)''), indicating that larger samples improved both edge recovery and the overall balance between correctly identified edges and non-edges.

Increasing the number of variables made structure recovery more challenging. Across the $N$ sweep, BSF declined from approximately $0.70$ at $N=4$ to approximately $0.49$ at $N=24$ (Figure~\ref{fig:app-sweeps-bsf}, panel ``No. variables (N)''), indicating weaker balanced recovery of edges and non-edges as the problem's dimensionality increased. Mean $F_1$ varied more moderately across this sweep (Figure~\ref{fig:app-sweeps-f1}, panel ``No. variables (N)''), suggesting that relative edge recovery performance did not collapse, but the decline in BSF indicates that overall structural recovery became less reliable as the number of candidate relationships increased.

Graph density mainly induced a trade-off between precision and recall. As density increased, precision improved, from approximately $0.21$ at density $0.08$ to approximately $0.52$ at density $0.30$, while recall decreased from approximately $0.82$ to $0.58$. As a result, the mean $F_1$ increased from approximately $0.32$ to $0.55$ (Figure~\ref{fig:app-sweeps-f1}, panel ``Graph density''). However, BSF did not improve in parallel (Figure~\ref{fig:app-sweeps-bsf}, panel ``Graph density'') and was highest in the sparsest setting, indicating that denser graphs were easier to recover in terms of edge overlap but not necessarily in terms of balanced recovery of both edges and non-edges.

Short-memory processes were clearly easier to recover than long-memory ones. Mean $F_1$ increased from approximately $0.26$ in the long lag setting to approximately $0.51$ in the short-lag setting (Figure~\ref{fig:app-sweeps-f1}, panel ``Lag distribution''), while BSF rose from approximately $0.32$ to $0.77$ (Figure~\ref{fig:app-sweeps-bsf}, panel ``Lag distribution''). SHD varied less strongly across this sweep (Figure~\ref{fig:app-sweeps-shd}, panel ``Lag distribution''), but the combined $F_1$ and BSF patterns show that long lag dependencies were substantially more difficult for the algorithm to recover reliably.

Noise affected recovery, but not in a monotonic way. Mean $F_1$ was approximately $0.40$ at noise $0.4$, dipped to approximately $0.37$ at $0.8$, and then increased to approximately $0.45$ and $0.46$ at $1.2$ and $1.6$, respectively (Figure~\ref{fig:app-sweeps-f1}, panel ``Noise SD''). BSF followed a similar pattern (Figure~\ref{fig:app-sweeps-bsf}, panel ``Noise SD''). Under these simulation settings, higher noise did not simply erase recoverable signal; instead, it appears to have interacted with the algorithm in a way that sometimes favoured better generalising structures.

Autocorrelation ($\phi$) had the strongest positive effect of any sweep. Performance was lowest at $\phi=0$, with mean $F_1 \approx 0.37$, but increased sharply at $\phi=0.3$ and remained high thereafter, with mean $F_1$ around $0.67$ to $0.69$ for $\phi \in \{0.3,0.6,0.9\}$ (Figure~\ref{fig:app-sweeps-f1}, panel ``Autocorrelation (phi)''). BSF showed the same pattern, rising from approximately $0.56$ at $\phi=0$ to above $0.80$ for moderate and high autocorrelation (Figure~\ref{fig:app-sweeps-bsf}, panel ``Autocorrelation (phi)''). This suggests that, in our setting, temporal persistence made the causal structure easier to detect.

We also evaluated performance under mixed binary and continuous data by varying the proportion of binary nodes. In this sweep, the mean $F_1$ was highest at the lower binary fraction, with $F_1 \approx 0.47$ at $\mathrm{fracbin}=0.2$ (Figure~\ref{fig:app-sweeps-f1}, panel ``Binary fraction''). Performance dropped at $\mathrm{fracbin}=0.5$ and partially recovered at $\mathrm{fracbin}=0.8$. BSF showed a similar pattern (Figure~\ref{fig:app-sweeps-bsf}, panel ``Binary fraction''). These results suggest that the effect of variable type composition is not monotonic and likely depends on how mixed-type modelling interacts with the specific data-generating process.

Missingness also produced non-monotonic effects. Under both MCAR and MAR, a small amount of missingness (5\%) improved recovery relative to the no missing baseline: mean $F_1$ rose from approximately $0.37$ at 0\% missingness to approximately $0.45$ under MCAR and approximately $0.44$ under MAR at 5\% (Figure~\ref{fig:app-sweeps-f1}, panels ``MCAR rate'' and ``MAR rate''). At higher missingness rates, performance became less stable. Under MCAR, mean $F_1$ dropped at 10\% and partially recovered at 20\%; under MAR, performance declined more gradually after 5\% but remained above the no-missing baseline. BSF shows the same general non-monotonic pattern (Figure~\ref{fig:app-sweeps-bsf}, panels ``MCAR rate'' and ``MAR rate''). These patterns likely reflect interactions between missingness, the simple imputation strategy, and the score.

Introducing unobserved confounding affected recovery, but again not in a strictly monotonic way. Mean $F_1$ increased from approximately $0.37$ with no confounders to approximately $0.42$ with 2 confounders, then fell slightly to approximately $0.39$ with 4 confounders (Figure~\ref{fig:app-sweeps-f1}, panel ``No. confounders''). SHD improved at 2 confounders and then worsened again at 4 (Figure~\ref{fig:app-sweeps-shd}, panel ``No. confounders''). BSF changed more mildly across this sweep (Appendix Figure~B3, panel ``No. confounders''). This behaviour is consistent with the fact that latent confounders can both induce spurious dependencies and obscure genuine ones, so their net effect depends on how false positives and false negatives trade off in a given setting.

Finally, lag recovery was generally accurate whenever an adjacency was correctly recovered. Across most sweeps, $\mathrm{MAE}_{\mathrm{lag}}$ was 0 or very close to 0 (Figure~\ref{fig:app-sweeps-lagmae}), indicating that the algorithm usually selected the correct lag once it had identified the correct edge. Non zero lag errors appeared only in a small number of settings and remained modest overall, with the largest values occurring in the larger $N$, higher $T$, denser, and moderate autocorrelation settings (Figure~\ref{fig:app-sweeps-lagmae}, panels ``No. variables (N)'', ``Sample size (T)'', ``Graph density'', and ``Autocorrelation (phi)'').

\section{Real world application: UK COVID-19 policy decision support}\label{sec:covid}

We test the new algorithm in the real-world setting of the UK COVID-19 pandemic, where policymakers needed evidence (before vaccines/treatments were widely available) about which interventions that reduce population interactions (e.g., reduce mobility and out-of-home activity) are most effective at reducing future infection burden. Petrungaro and Constantinou (\cite{bib29}) frame this as a causal inference problem: learn a causal model from routinely collected daily UK data, then simulate hypothetical interventions (via Pearl’s do-operator (\cite{bib30}) such as lowering mobility indices, and estimate the downstream impact on infection outcomes to support decision-making in future pandemics.

Petrungaro and Constantinou (\cite{bib29}) use an aggregated, publicly available daily dataset spanning from the 30th of January 2020 to the 13th June 2022 with 866 daily observations (861 after processing) and 46 continuous and categorical variables (45 after processing) covering: 

\begin{itemize}
    \item policy (schools, face masks, lockdown severity),
    \item epidemiological context (variant, season),
    \item  mobility/activity proxies (Flights, OpenTable restaurant bookings, Google mobility series, TfL Tube/Bus, Citymapper journeys),
    \item testing volumes and capacities,
    \item pandemic outcomes (new cases, new infections, reinfections, hospital metrics), and
    \item vaccination uptake and deaths.
\end{itemize}

The practical goal of the causal analysis is to learn a time-ordered cause-and-effect map from these observational time series and use it to stress-test hypothetical interventions. For example, reducing mobility/interaction proxies and estimating their downstream impact on infection-related outcomes. In the original policy-evaluation setup, interventions are framed as manipulating a “population interaction” variable at time $t-1$ and evaluating their effects on infection outcomes at time $t$. In the accompanying implementation, the candidate “interaction” variables correspond to the mobility proxies (e.g., flights, OpenTable/Google/Apple/TfL/Citymapper indices), while infection outcomes include new cases, new infections, and reinfections. Because COVID-19 policy evaluation is contentious regarding effect size but far less so regarding direction (e.g., reducing close-contact interactions should not increase infections), the scenario also naturally supports evaluating whether learnt causal relationships yield directionally plausible intervention effects, rather than relying only on predictive fit.

The proposed variable-lag tabu was tested with a maximum lag of $6$, learning a 477-edge lagged structure with 522 free parameters. The fitted log-likelihood and BIC (under our mixed/GLM scoring setup) were $LL = -228{,}775$ and $BIC = -230{,}537$. The variable-lag model sits between very sparse econometric structures and extremely dense score-based BNs, suggesting a different sparsity-fit trade-off under temporal constraints and lag regularisation. Absolute LL/BIC values are not directly comparable across the two studies because of different likelihood models and parameterisations; therefore, comparisons focus on structural properties (density, lag profile, and policy-identifiable links).

The learnt structure is strongly skewed toward short delays, but not exclusively lag-1 as was the modelling choice of \cite{bib29} (see Figure \ref{fig:lag-dist}). The average lag in this study structure is $2.16$. The results support the intuition of \cite{bib29} that lag-1 dominates (most edges are lag-1), but they also suggest that a sizeable minority (46.8\%) of dependencies prefer $\text{lags} > 1$, consistent with delayed behavioural/epidemiological responses. This is exactly the kind of effect a variable-lag approach is designed to surface while still controlling complexity via an explicit lag prior and penalty.

\begin{figure}[ht]
\centering
\includegraphics[width=0.8\linewidth]{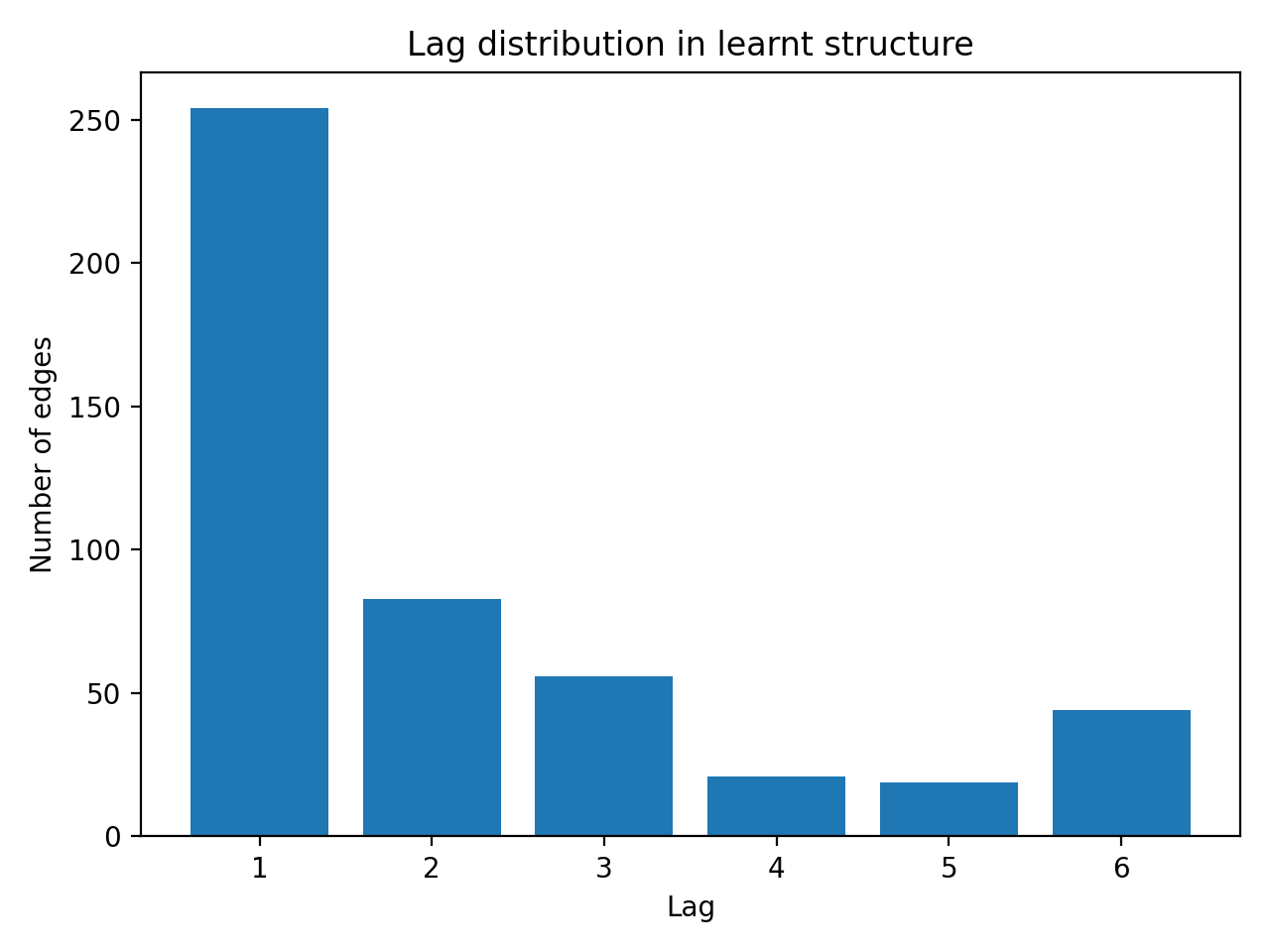}
\caption{Lag distribution of directed edges in the learnt time-series causal graph (maximum lag of $6$). Most dependencies are assigned to lag 1, with a long tail of edges at lags 2 to 6.}
\label{fig:lag-dist}
\end{figure}

\cite{bib29} found that HC and TABU identify 27 mobility $\rightarrow$ outcome effects each, but they also caution that these causal-ML graphs may violate chronological order (as standard TABU and HC are not temporally constrained), limiting direct policy interpretability. To make the policy-effect analysis comparable across studies, we intentionally count only direct lag-1 mobility $\rightarrow$ infection edges. Under this restriction, we identify 3 such effects. More specifically, the direct lag-1 effects are Google parks $\rightarrow$ new cases, Google grocery/pharmacy $\rightarrow$ new cases, and Google grocery/pharmacy $\rightarrow$ new infections. However, only 1 of these 3 direct effects matches the expected direction. Consistent with the density-to-identifiability observation in \cite{bib29}, our sparser, time-respecting model yields fewer immediately identifiable one-step mobility effects than unconstrained dense score-based graphs. However, if we relax this self-imposed restriction and allow longer-lag and multi-step directed paths, the proposed algorithm can find directed paths for all 36 mobility $\rightarrow$ outcome effects.

\section{Conclusion}\label{sec:conclusion}

In this study, we proposed a novel solution for score-based causal structure learning from multivariate time series that accounts for causal effects occurring at different, edge-specific delays. Existing methods typically assume a fixed lag window or do not explicitly optimise lags for each edge, forcing practitioners to choose between missing delayed effects with a small lag window or risking overfitting and increased dimensionality with a large one. To address this challenge, we introduced a Tabu-based structure-learning algorithm that searches over time-ordered directed graphs, allowing each edge to adopt its own lag within a user-specified maximum lag $L_{\max}$.

Our main methodological contributions include an extended move set with explicit change-lag operations and a greedy per-child lag-tuning routine, as well as a decomposable objective score that augments a standard BIC-style likelihood term with node-specific effective sample sizes and an explicit lag-length penalty. These design choices maintain the computational benefits of decomposability while promoting parsimonious delay assignments and discouraging reliance on long lags unless they significantly improve model fit. Notably, the effective-sample-size and lag-penalty terms break score equivalence, enabling the optimiser to select a unique DAG representative in cases where Markov-equivalent structures would otherwise be indistinguishable from observational data alone. An additional practical advantage of our regression-based mixed-data scoring framework is that, unlike conditional-Gaussian mixed BNs, it does not require discrete children to have only discrete parents.

We established algorithmic guarantees tailored to the time-series context. Since all permitted edges point strictly forward in time ($v_{i,t-\ell}\rightarrow v_{j,t}$ with $\ell\ge 1$), every candidate considered by the search induces an acyclic time-unrolled graph. The greedy initialisation terminates at a local optimum within the induced neighbourhood, and the subsequent Tabu phase can escape local optima while retaining the best-scoring graph encountered during the search.

In simulation, the proposed method recovered both adjacencies and lags, with performance trends that are qualitatively consistent with prior empirical studies of BN structure learning under synthetic perturbations, although direct comparison is not possible because our setting is time-series, permits edge-specific lags, and is not otherwise matched to those cross-sectional benchmarks (\cite{bib31}). In particular, the improvements we observed with larger sample sizes $T$ and the deterioration in recovery as the problem becomes harder (larger $N$) are in line with the broader empirical literature on structure learning (\cite{bib31}). Recovery was substantially harder under long-memory lag regimes, where many true edges were missed, despite lag error being small when an edge was correctly identified. Several sweeps exhibited non-monotonic behaviour (e.g., noise and autocorrelation), underscoring that structure-learning difficulty is governed not only by signal-to-noise ratio but also by temporal dependence, regularisation and finite data.

We also demonstrated feasibility in a real-world policy setting: daily UK COVID-19 data spanning 30 January 2020 to 13 June 2022, comprising 46 observed variables. With $L_{\max}=6$, the learnt lag profile was strongly skewed toward short delays but not confined to lag-1. When restricting to directly comparable direct lag-1 mobility $\rightarrow$ infection-outcome edges, the proposed time-ordered Tabu algorithm identified fewer one-step policy-relevant links than dense, unconstrained score-based graphs reported in \cite{bib29}. By removing the restriction to direct lag-1 effects, that is, effects represented by edges of the form $X_{t-1}\rightarrow Y_t$, and instead allowing longer-lag and multi-step directed paths, we found that the new algorithm can identify all effects proposed by \cite{bib29}.

Several limitations suggest clear avenues for future research. First, our local models are linear or GLMs; incorporating nonlinear models is possible within the same search framework, though at increased computational cost. Second, we employed simple single imputation to isolate the effects of missingness, but time-series-aware imputation methods could enhance robustness under missingness. Finally, the current framework assumes no contemporaneous edges and causal sufficiency; extending the approach to address latent confounding and to allow within-time-slice structure under appropriate identifiability assumptions represents an important direction for future work.

In summary, our findings indicate that explicitly optimising edge-specific lags within a temporally constrained DAG search enables recovery of meaningful multiscale temporal structure while preserving interpretability and computational tractability. Furthermore, the parallel implementation demonstrates that the framework can be efficiently scaled by distributing neighbourhood evaluation across CPU cores, while maintaining consistent score definitions and search logic. The proposed variable-lag Tabu framework offers a practical compromise between rigid fixed-lag modelling and overly flexible, high-dimensional lag expansions, providing a foundation for time-series causal discovery pipelines designed to support decision-making in dynamic, real-world systems.

\backmatter

\section*{Declarations}

\subsection{Data availability and access}

The data used in this study have been submitted alongside the paper. Code, replication materials, and additional documentation are available in the accompanying GitHub repository:

\url{https://github.com/br1pa/variable-lag-tabu}

\subsection{Competing interests}

The authors declare that they have no known competing financial interests or personal relationships that could have appeared to influence the work reported in this paper.

\subsection{Ethical and informed consent for data used}

This article does not contain any studies with human participants or animals performed by any of the authors.

\subsection{Contributions}

Bruno Petrungaro: Conceptualisation, Methodology, Software, Analysis, Writing-original draft preparation, review and editing. 
Anthony C. Constantinou: Writing - Review \& Editing, Supervision.

\begin{appendices}

\section{Algorithmic guarantees proofs}\label{proof_guarantees}

\begin{theorem}[Validity]
At any iteration of the algorithm, the time-unrolled graph induced by the current structure is a DAG.
\end{theorem}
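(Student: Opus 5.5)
The plan is an induction over iterations, combined with a time-index potential argument; the potential argument carries the real content. First I would establish an invariant: at every point in the run, including after the initialisation $G_0$, after each HC step, after each Tabu move, and after every intermediate step of \textsc{GreedyLagTune}, the compact structure $E_\ell$ contains only lagged edges $(i\!\to\!j,\ell)$ with $\ell\in\{1,\dots,L_{\max}\}$. I would check this move by move.
\begin{itemize}
\item \textsc{add} and \textsc{rev} initialise the affected lag at $\ell=1$.
\item \textsc{del} removes an edge and introduces nothing new.
\item \textsc{chg-lag} and the lag-tuning routine only test $\ell+1$ when $\ell<L_{\max}$ and $\ell-1$ when $\ell>1$, so the lag stays in range.
\end{itemize}
The neighbourhood is defined to respect lag bounds and temporal constraints, so every candidate $G_m$ also satisfies the invariant. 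The invariant therefore holds for whichever candidate is selected and for $G^\star$. For the base case, $G_0$ is assumed to be a compact lagged structure obeying the constraints; the empty graph qualifies trivially.

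Second, I would prove a purely graph-theoretic lemma. For any set $E_\ell$ of lagged edges with all lags $\ge 1$, the induced time-unrolled graph on $X=\{v_{i,t}\}$ is acyclic. Its edges are $v_{i,t-\ell}\to v_{j,t}$ for all valid $t$. I would use the potential $\tau(v_{i,t})=t$. Every edge $v_{i,s}\to v_{j,t}$ satisfies $t-s=\ell\ge1$, so $\tau$ strictly increases along each edge.

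Suppose there were a directed cycle $u_0\to u_1\to\dots\to u_k=u_0$ with $k\ge1$. Summing the increments along the cycle would give $0=\tau(u_k)-\tau(u_0)\ge k\ge1$, a contradiction. Equivalently, sorting nodes by time index gives a topological order. Self-loops $i=j$ need no separate treatment, since they connect distinct copies $v_{i,t-\ell}$ and $v_{i,t}$. Feedback cycles in the compact graph, such as $X\to Y$ and $Y\to X$, likewise unroll to forward-in-time paths.

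Combining the invariant with the lemma yields the theorem at every iteration. The main obstacle is not the acyclicity argument, which is immediate. It is making the invariant airtight across all code paths. In particular, I must argue that a reversal $u\to v$ becoming $v\to u$ is reinterpreted as a new lagged edge with $\ell=1$, rather than an edge pointing backward in time. I would handle this by noting that the algorithm never stores absolute time directions, only compact edges with positive lags. Any reversal therefore yields an edge $v_{v,t-1}\to v_{u,t}$, which is again forward in time.
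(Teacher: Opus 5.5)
Your proposal is correct and uses essentially the same argument as the paper: every compact edge $(u,\ell)\to v$ with $\ell\ge 1$ unrolls to $u_{t-\ell}\to v_t$, so the time index strictly increases along each edge and no directed cycle can exist. The only difference is that you explicitly check, move by move, the invariant that all lags stay in $\{1,\dots,L_{\max}\}$, including the lag-$1$ reinitialisation on reversal; the paper simply takes this as given from the definition of the compact representation.
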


\begin{proof}
Every edge in the compact representation has the form $(u,\ell)\to v$ with $\ell\ge 1$, which corresponds in the time-unrolled graph to edges $u_{t-\ell}\to v_t$ for all valid $t$. Along any directed edge, the time index increases by at least one step (from $t-\ell$ to $t$). A directed cycle would require returning to a past time index, which is impossible under strictly increasing time. Hence, directed cycles cannot occur, and the unrolled graph is acyclic at every iteration.
\end{proof}

\begin{theorem}[Local Optimality of the Greedy Initialisation]
Let $G_{\mathrm{hc}}$ be the graph returned by the HC initialisation. Then $G_{\mathrm{hc}}$ is a local optimum with respect to the neighbourhood induced by the allowed moves (add/delete/reverse/change-lag) under the chosen score $S(\cdot)$.
\end{theorem}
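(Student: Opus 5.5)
The plan is to argue from the termination rule of the HC initialisation. HC runs on the same finite move set $\mathcal{M}=\{\textsc{add},\textsc{del},\textsc{rev},\textsc{chg-lag}\}$ as Algorithm~\ref{alg:tabu_varlag_main}, but with no tabu list. At each step it moves to the best-scoring neighbour and stops as soon as no neighbour strictly improves $S$. Two things must be shown: (i) HC terminates, and (ii) at termination no neighbour in $\mathcal{N}(G_{\mathrm{hc}})$ has a strictly larger score. Together these are exactly the claimed local optimality.

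For termination, I first note that the search space is finite. The compact graph has at most $N^2$ ordered pairs, and each present edge carries a lag in $\{1,\dots,L_{\max}\}$. So there are at most $(L_{\max}+1)^{N^2}$ compact structures, and by the Validity theorem each induces a DAG, so every one of them is admissible. Every accepted HC move strictly increases $S$, including the moves inside \textsc{GreedyLagTune}, which only accepts strict improvements. Hence no structure is visited twice, and the search halts after finitely many steps; the \texttt{Repeat} loop of Algorithm~\ref{alg:tabu_varlag_lagtune} terminates for the same reason. Decomposability of Eq.~\eqref{eq:score-decomp} is not needed for correctness here. It only guarantees that the local rescoring used to compare neighbours agrees with the global score difference, because the other node terms are unchanged.

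For optimality at termination, I argue by contradiction. Suppose some $G'\in\mathcal{N}(G_{\mathrm{hc}})$, obtained by a single move $m$, satisfies $S(G')>S(G_{\mathrm{hc}})$. HC evaluated $m$ in its final iteration, since the neighbourhood is enumerated exhaustively and no tabu restriction applies. So the best candidate score was at least $S(G')>S(G_{\mathrm{hc}})$, and HC would have moved rather than stopped.

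The main obstacle is that candidates from \textsc{add} and \textsc{rev} are post-processed by \textsc{GreedyLagTune}, so the evaluated candidate $G_m^{\mathrm{tuned}}$ can differ from the raw neighbour $G'$. I handle this in two cases. For \textsc{add}, \textsc{rev} and \textsc{chg-lag}, tuning only accepts strict improvements, so $S(G_m^{\mathrm{tuned}})\ge S(G_m)$. The raw result of an \textsc{add} or \textsc{rev} has lag $1$ on the new edge. Any other lag for that edge is reachable by an \textsc{add}/\textsc{rev} followed by \textsc{chg-lag} steps, so I define the neighbourhood as the lag-1 insertion (for \textsc{add}/\textsc{rev}) together with the $\pm1$ lag changes (for \textsc{chg-lag}), which is the neighbourhood the statement refers to. For \textsc{del}, no tuning is applied and the raw neighbour is the candidate itself. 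In both cases the evaluated score of $m$ is at least $S(G')$, which gives the contradiction. I would state this convention on the neighbourhood explicitly, since the claim is only as strong as the neighbourhood defined by the implementation.
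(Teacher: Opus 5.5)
Your argument is correct and is the same as the paper's: HC accepts only strictly improving moves and stops exactly when none exists, so at termination every $G'\in N(G_{\mathrm{hc}})$ satisfies $S(G')\le S(G_{\mathrm{hc}})$. The paper takes two things for granted that you prove: that HC terminates, and that the \textsc{GreedyLagTune}-processed candidate (with add/reverse inserting at lag $1$ and change-lag moving by $\pm1$) scores at least as high as the raw neighbour. These additions make the same argument rigorous rather than giving a different route.
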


\begin{proof}
Let $N(G)$ denote the set of graphs reachable from $G$ by one allowed move (add, delete, reverse, or a lag change) that respects the lag bounds and time-order constraints. The greedy initialisation updates the current graph only when it finds a neighbour $G'\in N(G)$ such that $S(G')>S(G)$, and it terminates exactly when no such improving neighbour exists. Therefore, at termination for $G_{\mathrm{hc}}$ we have $\nexists\,G'\in N(G_{\mathrm{hc}})$ with $S(G')>S(G_{\mathrm{hc}})$, i.e., $\forall\,G'\in N(G_{\mathrm{hc}}): S(G_{\mathrm{hc}})\ge S(G')$.
\end{proof}

\section{Full sweep-level simulation results}\label{full-sweep-results}

This appendix reports the full sweep-level results for the one factor at a time simulation experiments discussed in Section~\ref{sec: results}.

\begin{figure}[ht]
    \centering
    \includegraphics[width=\textwidth]{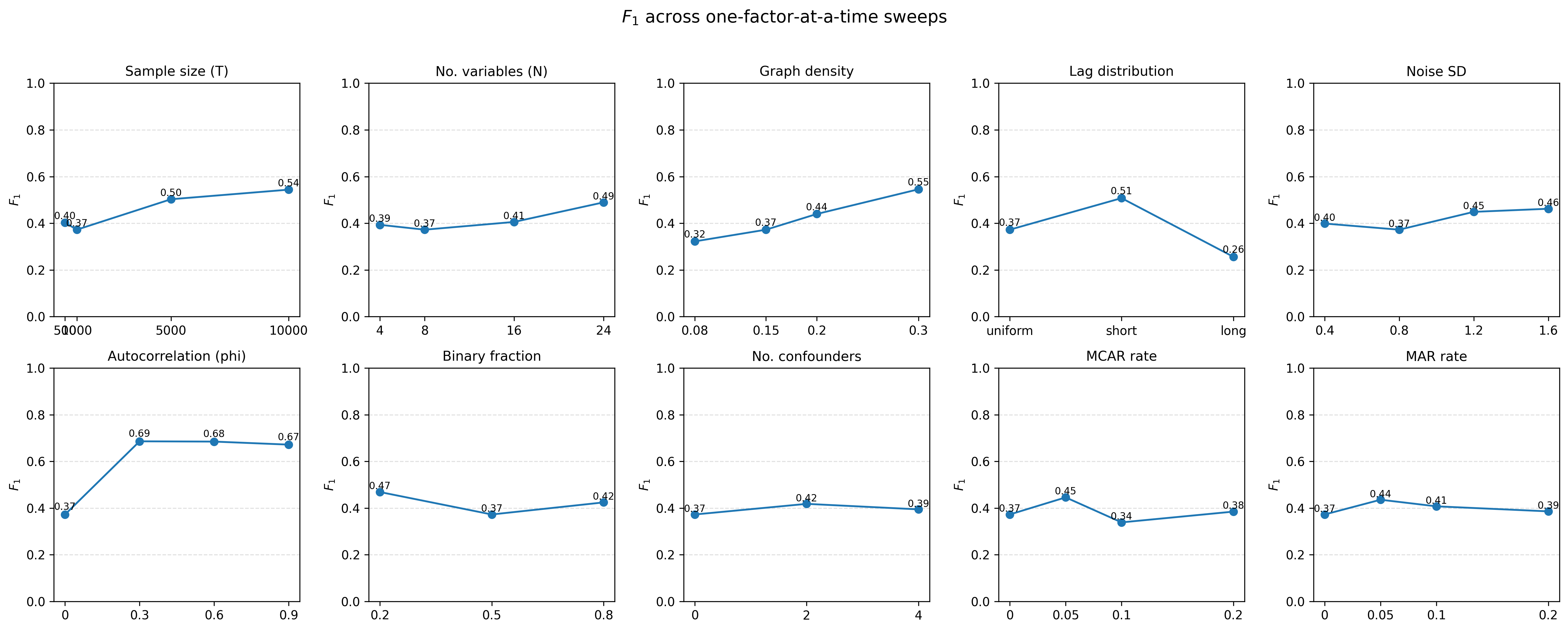}
    \caption{Sweep-level $F_1$ results across the 10 one factor at a time simulation sweeps. Each panel varies one factor while holding the others at their baseline values. Points show the mean performance over 5 trials for each setting.}
    \label{fig:app-sweeps-f1}
\end{figure}

\begin{figure}[ht]
    \centering
    \includegraphics[width=\textwidth]{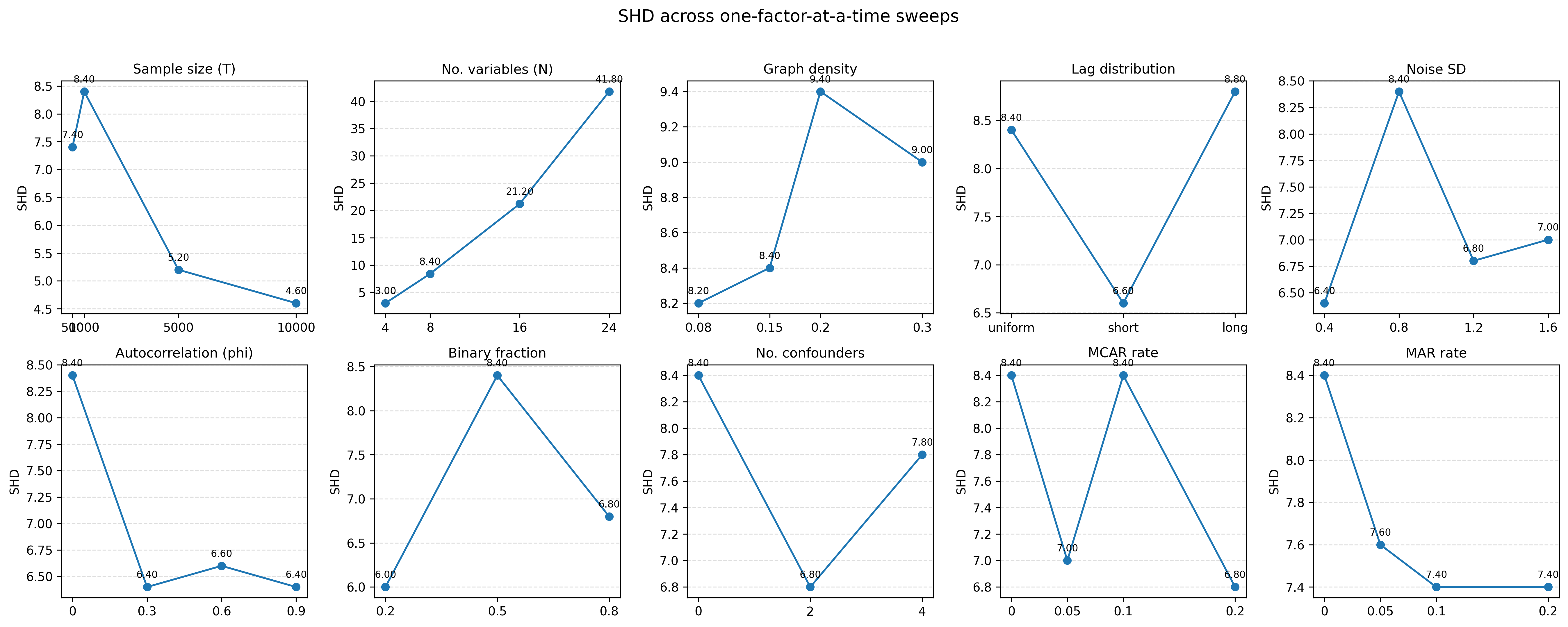}
    \caption{Sweep-level SHD results across the 10 one factor at a time simulation sweeps. Lower values indicate better structural recovery. Points show the mean performance over 5 trials for each setting.}
    \label{fig:app-sweeps-shd}
\end{figure}

\begin{figure}[ht]
    \centering
    \includegraphics[width=\textwidth]{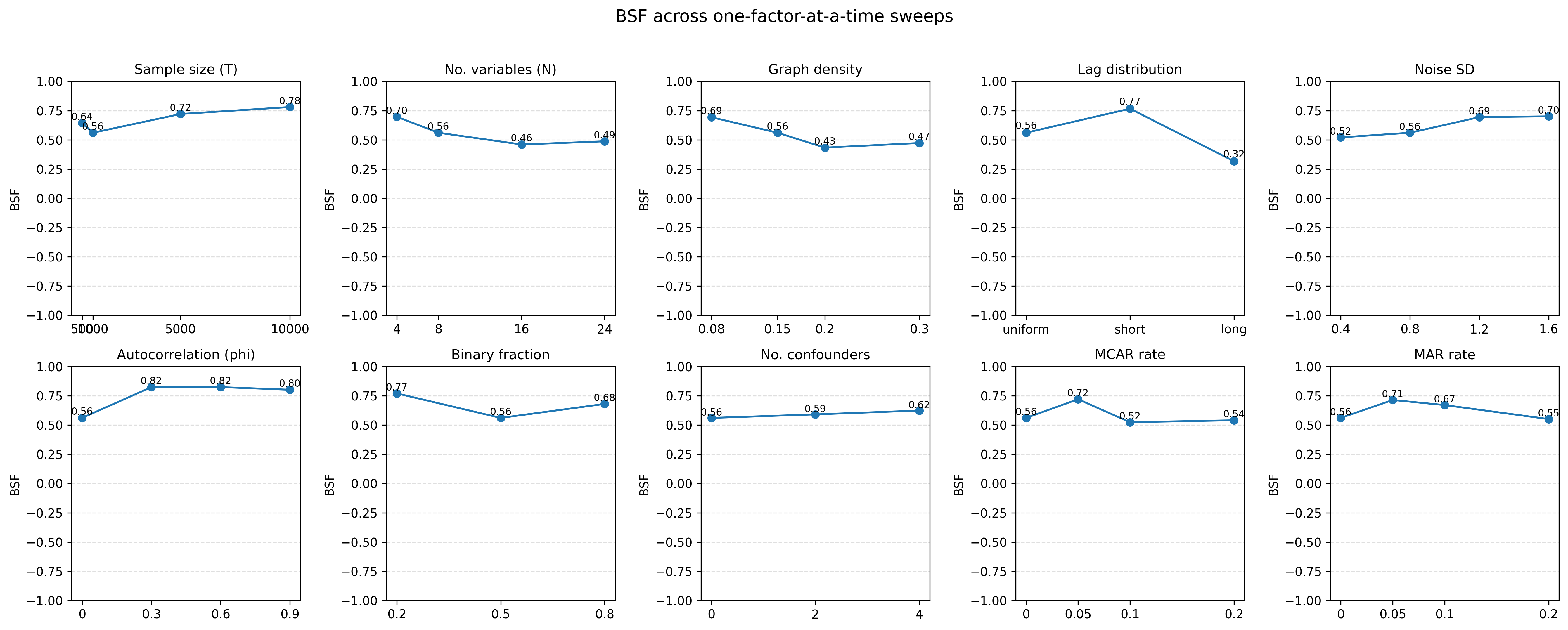}
    \caption{Sweep-level BSF results across the 10 one factor at a time simulation sweeps. Higher values indicate better balanced recovery of edges and non-edges. Points show the mean performance over 5 trials for each setting.}
    \label{fig:app-sweeps-bsf}
\end{figure}

\begin{figure}[ht]
    \centering
    \includegraphics[width=\textwidth]{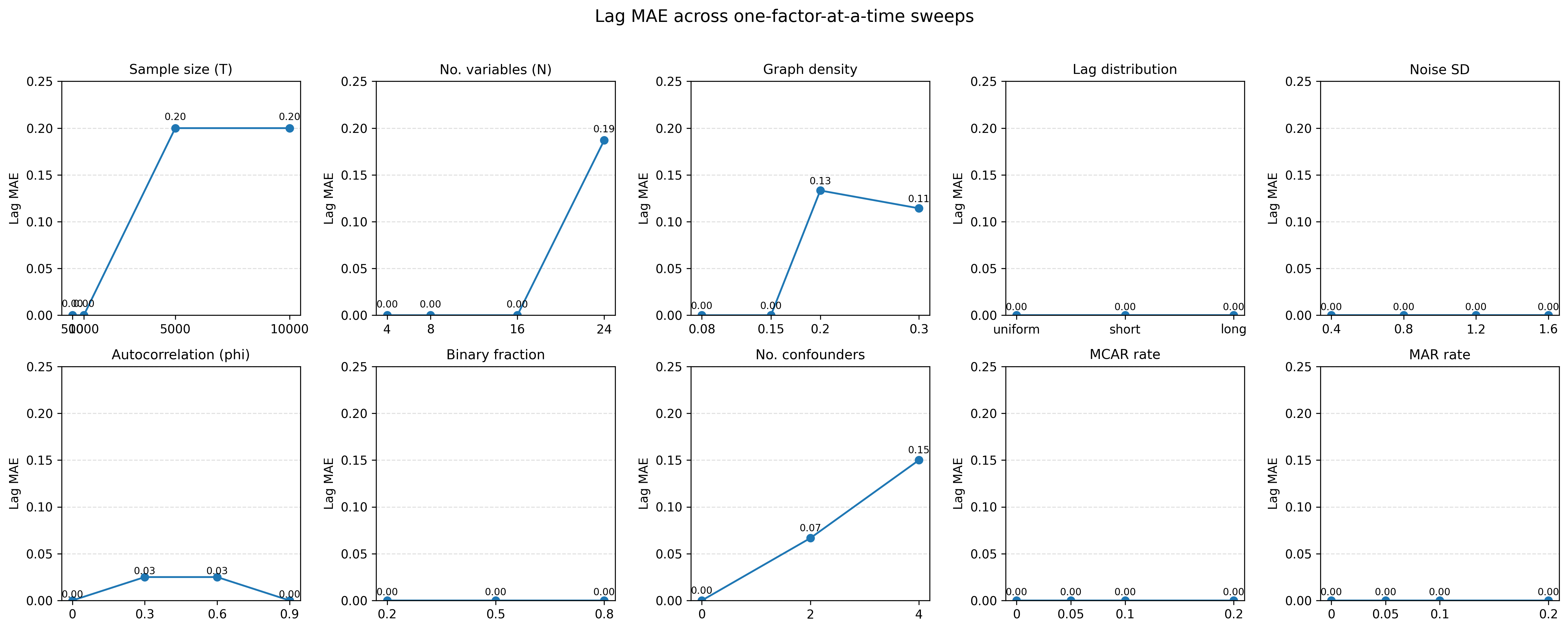}
    \caption{Sweep-level lag-MAE results across the 10 one factor at a time simulation sweeps. Lower values indicate more accurate lag recovery on correctly recovered adjacencies. Points show the mean performance over 5 trials for each setting.}
    \label{fig:app-sweeps-lagmae}
\end{figure}

\end{appendices}

\clearpage
\bibliography{sn-bibliography}

\end{document}